\documentclass{article}

\usepackage[final, nonatbib]{neurips_2022}

\usepackage[utf8]{inputenc} 
\usepackage[T1]{fontenc}    
\usepackage{hyperref}       
\usepackage{url}            
\usepackage{booktabs}       
\usepackage{amsfonts}       
\usepackage{nicefrac}       
\usepackage{microtype}      
\usepackage{xcolor}         
\usepackage{multirow}
\usepackage{bbm}
\usepackage{color, colortbl}
\usepackage{amsmath,amssymb,amsthm}
\usepackage{graphicx}
\usepackage{tikz}
\usepackage{wrapfig}
\usepackage{caption}
\usepackage{subcaption}
\usepackage{geometry}
\usepackage{layout}
\usepackage{enumitem}
\usepackage{xcolor}
\def\*#1{\mathbf{#1}}
\DeclareMathOperator*{\argmax}{arg\,max}


\usepackage[square,numbers]{natbib}

\setlength\parindent{0pt} 
\theoremstyle{definition}

\newtheorem{theorem}{Theorem}[section]
\newtheorem{assumption}{Assumption}[section]

\def\ie{\emph{i.e}., }
\def\eg{\emph{e.g}., }
\def\etal{\emph{et al.}}
\def\cf{\emph{c.f}. }
\def\etc{\emph{etc}}
\title{Delving into Out-of-Distribution Detection with\\ Vision-Language Representations}

\author{%
  Yifei Ming$^1\quad$ Ziyang Cai$^1\quad$ Jiuxiang Gu$^2\quad$  Yiyou Sun$^1\quad$ Wei Li$^3\quad$ Yixuan Li$^1\quad$  \\
  $^1$Department of Computer Sciences, University of Wisconsin-Madison\\ $\quad^2$Adobe $\quad^3$Google Research\\
  \texttt{\{alvinming,ziyangc,sunyiyou,sharonli\}@cs.wisc.edu}\\
 $\quad$ \texttt{jigu@adobe.com} $\quad$ \texttt{mweili@google.com} 
}

\begin{document}

\maketitle

\begin{abstract}
Recognizing out-of-distribution (OOD) samples is critical for machine learning systems deployed in the open world. The vast majority of OOD detection methods are driven by a single modality (\textit{e.g.}, either vision or language), leaving the rich information in multi-modal representations untapped.
Inspired by the recent success of vision-language pre-training, this paper enriches the landscape of OOD detection from a single-modal to a multi-modal regime. Particularly, we propose Maximum Concept Matching (MCM), a simple yet effective zero-shot OOD detection method based on aligning visual features with textual concepts.  We contribute in-depth analysis and theoretical insights to understand the effectiveness of MCM.
Extensive experiments demonstrate that MCM achieves superior performance on a wide variety of real-world tasks. MCM with vision-language features outperforms a common baseline with pure visual features on a hard OOD task with semantically similar classes by 13.1\% (AUROC). Code is available at \url{https://github.com/deeplearning-wisc/MCM}. 
\end{abstract}

\section{Introduction}

Out-of-distribution (OOD) detection is critical for deploying machine learning models in the wild, where samples from novel classes can naturally emerge and should be flagged for caution. Despite increasing attention, the vast majority of OOD detection methods are driven by single-modal learning~\cite{hendrycks-etal-2020-pretrained,hsu2020generalized,jin2022towards,shen2021enhancing,xu-etal-2021-unsupervised,zhan2021out,zheng2020out,zhou2021contrastive}.
For example, labels are typically encoded as one-hot vectors in image classification, 
leaving the semantic information encapsulated in texts largely unexploited. 
OOD detection relying on pure visual information can inherit the limitations, \emph{e.g.}, when an OOD input may be visually similar to  in-distribution (ID) data yet semantically different from any ID class.

In this paper, we delve into a new landscape for OOD detection, departing from the classic {single-modal} toward a \emph{multi-modal} regime. While the motivation is appealing, a core challenge remains: \textit{how to effectively utilize joint vision-language features for OOD detection?} In the visual domain, existing methods typically require good feature representations~\cite{2021ssd,tack2020csi}, and a distance metric under which OOD data points are relatively far away from the in-distribution (ID) data~\cite{lee2018simple,sun2022knn}. These approaches, however, do not directly translate into the multi-modal regime. On the representation learning side, recent vision-language pre-training schemes such as CLIP~\cite{radford2021learning} and ALIGN~\cite{jia2021scaling} have emerged as promising alternatives for visual representation learning. The main idea is to align an image with its corresponding textual description in the feature space. While the resulting representations are powerful, OOD detection based on such aligned multi-modal features is still in its infancy.

We bridge the gap by exploring a distance-based OOD detection approach, leveraging the joint vision-language representations. Our method capitalizes on the compatibility between visual features and textual features. By defining the textual features as the ``\emph{concept prototypes}'' for each ID class, we characterize OOD uncertainty by the distance from the visual feature to the closest ID prototype. That is, images closer to one of the textual embeddings of ID classes are more likely to be ID and vice versa. By a proper scaling of the distance, our proposed Maximum Concept Matching (\textbf{MCM}) score achieves strong ID-OOD separability (see Figure~\ref{fig:teaser}).
MCM
stands in contrast with the previous distance-based approaches, such as Mahalanobis~\cite{lee2018simple}, which defines class prototypes based on pure visual embeddings. Indeed, we show later in Section~\ref{sec:closer} that MCM (with multi-modal vision-language features) is far more competitive than Mahalanobis (with single-modal visual features). Moreover, while prior works of CLIP-based OOD detection~\cite{esmaeilpour2022zero,fort2021exploring} rely on a set of candidate OOD labels, MCM is OOD-agnostic and alleviates the need for any prior information about test inputs.

\begin{figure*}[t]
  \centering
    \includegraphics[width=1.0\linewidth]{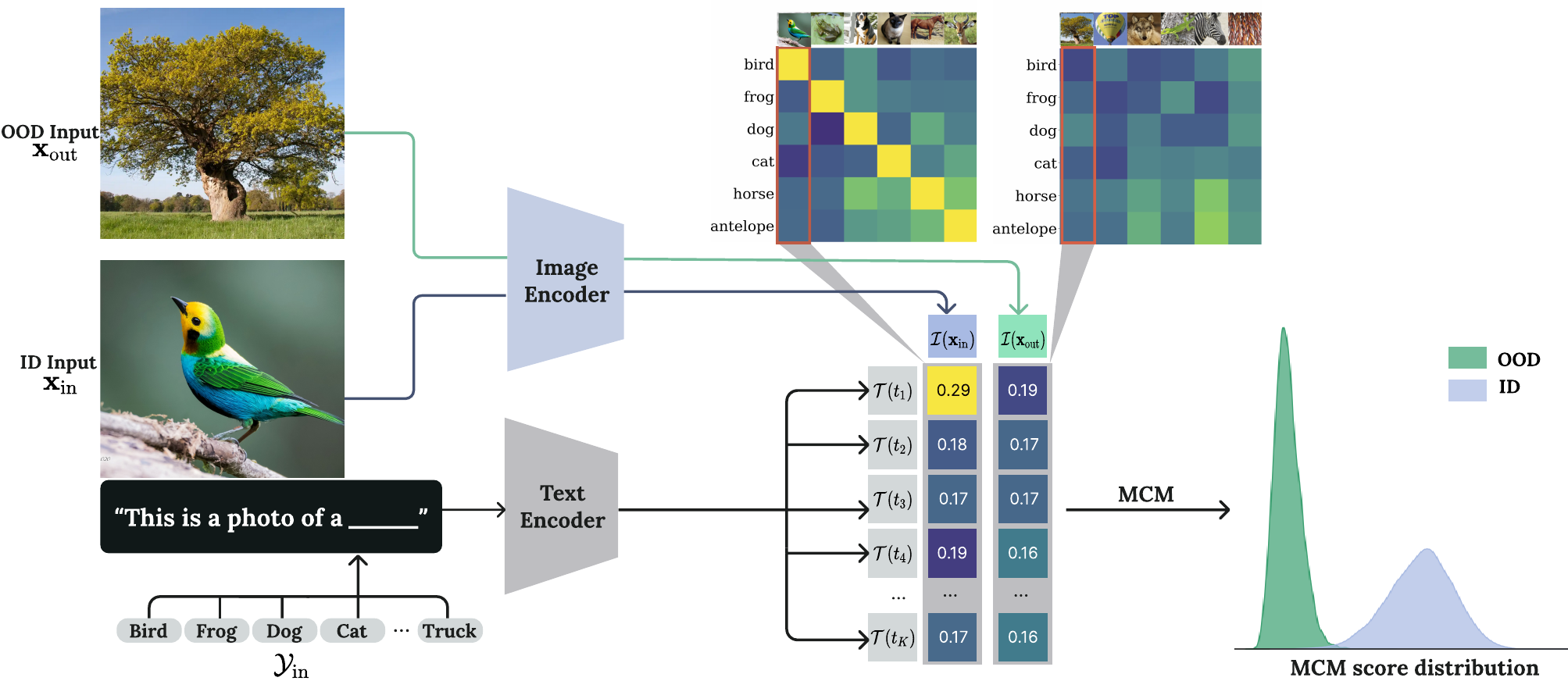}
    
\caption{
Overview of the proposed zero-shot OOD detection framework.
The ID classification task is defined by a set of class labels $\mathcal{Y}_\text{in}$. The goal of OOD detection is to detect samples that do not belong to $\mathcal{Y}_\text{in}$. We view the textual embeddings of ID classes (wrapped by text templates) as concept prototypes. The OOD uncertainty of an input image can be characterized by the distance from visual features to the closest ID prototype. By properly scaling the distance, the MCM score achieves strong ID-OOD separability. See Section~\ref{sec:method} for details.}
\vspace{-3mm}
\label{fig:teaser}
\end{figure*}

Our work also advances the field by showcasing the promise of zero-shot OOD detection, which offers strong performance and generality without training on the ID samples. In particular, classic OOD detection methods often require training from scratch~\cite{chen2021robustifying,hendrycks2018deep} or fine-tuning~\cite{fort2021exploring,huang2021mos} on a given ID dataset. In this setting, a classifier and its companion OOD detector are good at only one task.
Every new task (ID dataset) requires additional training and brings additional computation and storage costs.
In contrast, we show for the first time that: \textbf{(1)} MCM
achieves
superior performance across a wide variety of real-world tasks---with just \emph{one single pre-trained model}. This is encouraging given that there is no training or any OOD information involved. \textbf{(2)} On the challenging ImageNet-1k benchmark, MCM's zero-shot OOD detection performance favorably matches and even outperforms strong task-specific baselines fine-tuned on BiT~\cite{huang2021mos} and ViT models~\cite{fort2021exploring}. \textbf{(3)} MCM remains robust against hard OOD inputs, including both semantically hard OODs~\cite{winkens2020contrastive} and spurious OODs~\cite{ming2022spurious}.

We summarize our main contributions as follows:
\begin{enumerate}
\item We propose MCM, a simple yet effective OOD detection method based on aligned vision-language features.
MCM offers several compelling advantages over other OOD detection methods:
 generalizable (one model supports many tasks), OOD-agnostic (no information required from OOD data), training-free (no downstream fine-tuning required), and scalable to large real-world tasks.
\item We conduct extensive experiments and show that MCM achieves superior performance on a wide range of real-world tasks. On ImageNet-1k, MCM achieves an average AUROC of 91.49\%, outperforming methods that require training. Moreover, we show that MCM remains competitive under challenging hard OOD evaluation tasks.
\item We provide in-depth empirical and theoretical analysis, providing insights to understand the effectiveness of MCM. We hope that this work will serve as a springboard for
future works on OOD detection with multi-modal features.  
\end{enumerate}

\section{Preliminaries}
\paragraph{Contrastive vision-language pre-training.}
Compared to visual representation learning models such as ViT~\cite{dosovitskiy2021an}, vision-language representation learning demonstrates superior performance on image classification tasks. For instance, CLIP~\cite{radford2021learning} adopts a self-supervised contrastive objective (\ie InfoNCE loss~\cite{van2018representation}) to align an image with its corresponding textual description in the feature space. 
Specifically, CLIP adopts a simple dual-stream architecture with one text encoder $\mathcal{T}: t \rightarrow \mathbb{R}^d$ (\eg Transformer~\cite{vaswani2017attention}) and one image encoder $\mathcal{I}: \*x \rightarrow \mathbb{R}^d$ (\eg ViT~\cite{dosovitskiy2021an}). After pre-training on a dataset of 400 million text-image pairs, the joint vision-language embeddings of CLIP well associate objects in different modalities. Despite the promise, existing CLIP-like models perform zero-shot classification in a \emph{closed-world} setting. That is, it will match an input into a fixed set of categories, even if it is irrelevant (\eg  a tree being predicted as a bird in Figure~\ref{fig:teaser}). This motivates our work to leverage the multi-modal representation for OOD detection, which is largely unexplored.

\paragraph{Zero-shot OOD detection.} 
Given a pre-trained model, a classification task of interest is defined by a set of
class labels/names $\mathcal{Y}_\text{in}$, which we refer to as the known (ID) classes. Here ID classes are defined \emph{w.r.t.} the classification task of interest, instead of the classes used in pre-training.
Accordingly, OOD is defined \emph{w.r.t.} the ID classes, not the data distribution during pre-training. The goal of OOD detection
is to (1) detect samples that do not belong to any of the
known classes; (2) otherwise, assign test samples to one of the known
classes. Therefore, the OOD detector can be viewed as a ``safeguard'' for the classification model. Formally, we denote the OOD detector as a binary function: $G(\*x;\mathcal{Y}_\text{in}, \mathcal{T}, \mathcal{I}): \mathcal{X} \rightarrow \{\text{in}, \text{out}\}$,
where $\*x \in \mathcal{X}$ denotes a test image. 
Our method is based on only the names of the
given classes in $\mathcal{Y}_\text{in}$, and a pre-trained model. Different from standard supervised learning, there is no training on the ID samples involved, hence zero-shot.

\section{OOD Detection via Concept Matching}
\label{sec:method}
We illustrate our approach in Figure~\ref{fig:teaser}, which derives the OOD detector $G(\cdot)$ based on \emph{concept matching}. For a given task with label set $\mathcal{Y}_\text{in}=\{y_1, y_2,...,y_K\}$, we can construct a collection of concept vectors $\mathcal{T}(t_i), i\in \{1,2,...,K\}$, where $t_i$ is the text prompt
``\texttt{this is a photo of a $\langle y_i \rangle$}''
for a label $y_i$.
The concept vectors are represented by the embeddings of the text prompts.

For any test input image $\*x'$, we can calculate the label-wise matching score based on the cosine similarity between the image feature $\mathcal{I}(\*x')$ and the concept vector $\mathcal{T}(t_i)$: $s_i(\*x') = \frac{\mathcal{I}(\*x') \cdot \mathcal{T}(t_i)}{\lVert \mathcal{I}(\*x')\rVert \cdot \lVert \mathcal{T}(t_i) \rVert}$.
Formally,
we define the maximum concept matching (\textbf{MCM}) score as:
\begin{equation}
    S_{\text{MCM}}(\*x';\mathcal{Y}_\text{in},\mathcal{T},\mathcal{I}) = \max_i \frac{e^{s_i(\*x')/\tau}}{\sum_{j=1}^K e^{s_j(\*x')/\tau}},
\end{equation}
where $\tau$ is the temperature. For ID data, it will be matched to one of the concept vectors (textual prototypes) with a high score; and vice versa.
Formally, our OOD detection function can be formulated as:
 \begin{align*}
\label{eq:threshold}
	G(\*x';\mathcal{Y}_\text{in},\mathcal{T},\mathcal{I}) =\begin{cases} 
      1 & S_{\text{MCM}}(\*x';\mathcal{Y}_\text{in},\mathcal{T},\mathcal{I})\ge \lambda \\
      0 & S_{\text{MCM}}(\*x';\mathcal{Y}_\text{in},\mathcal{T},\mathcal{I}) < \lambda 
  \end{cases},
\end{align*}
where by convention $1$ represents the positive class (ID) and $0$ indicates OOD. $\lambda$ is chosen so that a high fraction of ID data (\eg 95\%) is above the threshold. For samples that are classified as ID, one can obtain the class prediction based on the closest concept:
$\hat y = \argmax_{i\in[K]} s_i$. 

\textbf{Remark}: (1) Our work differs from (and is complementary to) CLIP by focusing on OOD detection rather than (closed-world) zero-shot classification. We show new theoretical insights that softmax scaling plays a unique role in zero-shot OOD detection---improving the separability between ID and OOD data. This role has not been studied rigorously for zero-shot OOD detection. Readers familiar with CLIP may notice that MCM can be used for zero-shot classification in the closed world. This also makes MCM practically convenient for dual goals: detect OOD samples and assign ID data to one of the known classes. (2) Our method in principle is not limited to CLIP; it can be generally applicable for contrastive vision-language pre-training models that promote multi-modal feature alignment.

\paragraph{New insights on softmax scaling for zero-shot OOD detection.} 
We provide theoretical justifications that softmax scaling improves the separability between ID and OOD data for CLIP-based OOD detection, which is \emph{contrary} to models trained with cross-entropy (CE) loss.
In particular, CLIP-like models are trained with a multi-modal contrastive loss, which maximizes the cosine similarity between an image and its textual description in the feature space. The resulting cosine similarity scores display strong \emph{uniformity}\footnote{\noindent This can be explained both theoretically~\cite{wang2020understanding} and empirically ~\cite{wang2021understanding}. It has been shown that self-supervised contrastive learning with a smaller temperature (\eg initialized as 0.07 for CLIP) promotes uniform distribution for $L_2$-normalized features. Moreover, as CLIP features lie on a high-dimensional space (512 for CLIP-B/16 and 768 for CLIP-L/14), uniformly distributed points in a high-dimensional sphere tend to be equidistant to each other~\cite{vershynin2018high}. Therefore, for OOD inputs, we observe approximately uniform cosine similarity with concept vectors.} across labels, as evidenced in Figure~\ref{fig:sep}~(right). Compared to OOD inputs, the {gap} between the maximum cosine similarity and the average is larger for ID inputs. However, the gap can be small when the number of ID classes increases where ID samples occur with lower highest cosine similarity. As a result,  the highest cosine similarity for ID samples and OOD samples can be highly close (\emph{c.f.} Figure~\ref{fig:sep}~(left)). 

\begin{wrapfigure}[16]{r}{0.52\textwidth}
  \begin{center}
        \vspace{-0.4cm}\includegraphics[width=0.52\textwidth]{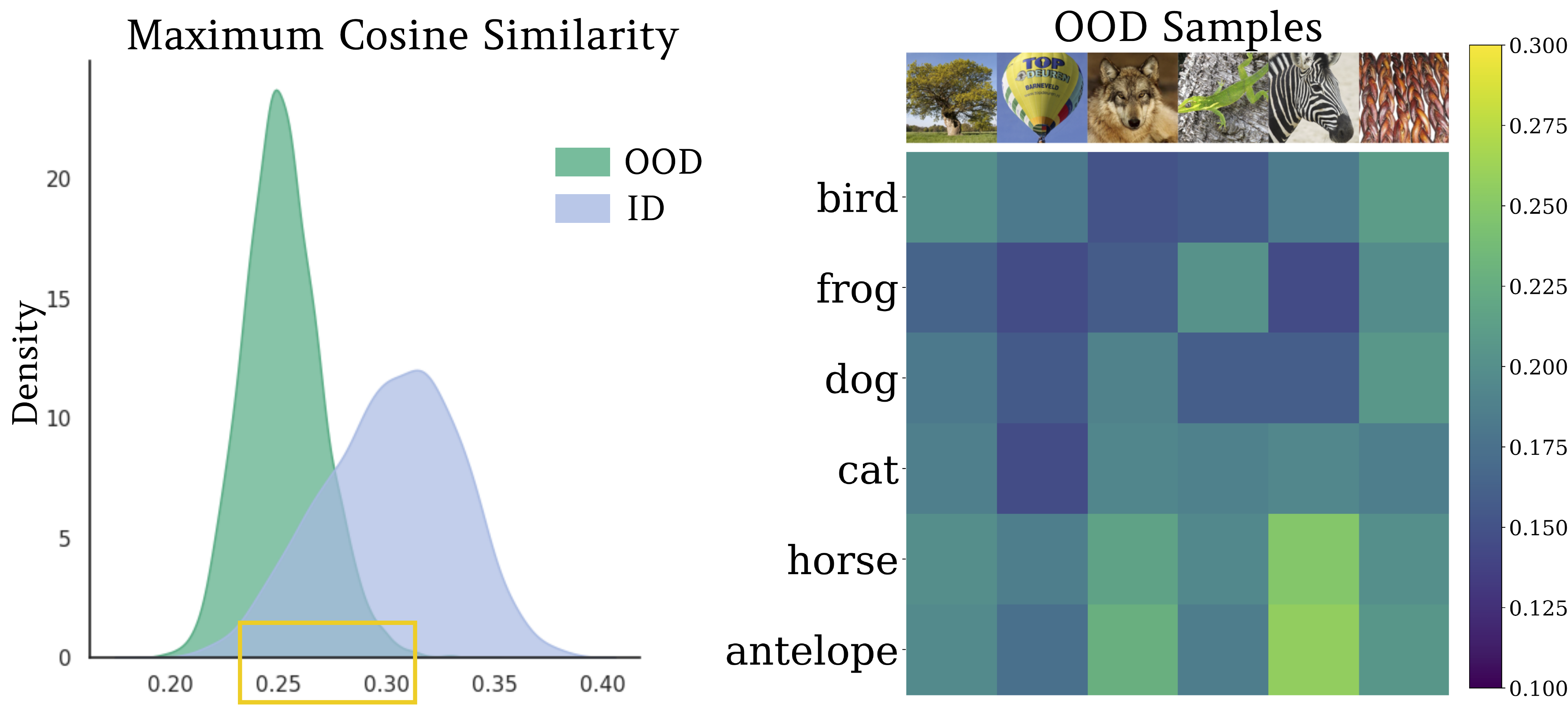}
  \end{center}
    \vspace{-0.2cm}
  \caption{\small Left: Maximum cosine similarity for ID and OOD inputs. There exists overlapping regions (shown in yellow); Right: Cosine similarities between OOD inputs and ID concept vectors. For OOD inputs, the cosine similarities display uniformity.}
  \label{fig:sep}
\end{wrapfigure}
Motivated by these observations, MCM employs softmax as a post hoc mechanism to \textbf{magnify} the difference. This is \emph{fundamentally different from the softmax score derived from a model trained with cross-entropy loss}, which inherently maximizes the posterior $p(y|\*x)$ for the ground-truth label, and minimizes the probability for other labels.  Unlike CLIP-like models, logit scores displaying uniformity would be heavily penalized by the CE loss. As a result, the logit score corresponding to the ground-truth label can already be significantly higher than other labels. Applying softmax on the logit scores can exacerbate overconfident predictions, and reduce the separability between ID and OOD data~\cite{liang2018enhancing}. Indeed, for a model trained with cross-entropy loss, a logit-based score such as Energy~\cite{liu2020energy} is shown to be much more effective than the softmax score. 

Interestingly, for CLIP-like models, the trend is the opposite---applying softmax helps sharpen the uniform-like inner product scores, and increases the separability between ID and OOD data. To help readers better understand the insights, 
we first formalize our observations that OOD inputs trigger \emph{similar cosine similarities} across ID concepts (Figure~\ref{fig:sep}, right) as the following assumption:

\begin{assumption}
\label{thm:assumption} Let $z := \mathbbm{1}\{y \in \mathcal{Y}_\text{in}\}$. $Q_{\*x}$ denotes the out-of-distribution $\mathbb{P}_{\*x \mid z = 0}$ (marginal distribution of $\*x$ conditioned on $z = 0$). Assume $\exists\,\delta>0$ such that
$$
Q_{\*x}\left({1\over K-1}\sum_{i \neq \hat{y}}\left[s_{\hat{y}_{2}}(\*x)-s_{i}(\*x)\right]< \delta\right)=1,
$$
where $\hat y := \text{argmax}_{i\in[K]} s_i(\*x)$ and $\hat y_2 := \text{argmax}_{i\neq \hat y, i\in[K]} s_i(\*x)$ denote the indices of the largest and second largest cosine similarities for an OOD input $\*x$. 
\end{assumption}

Now we provide formal guarantees that using softmax can provably reduce  the false positive rate (FPR) compared to that without softmax.

\begin{theorem}
\label{thm:main}
Given a task with ID label set $\mathcal{Y}_\text{in}=\{y_1, y_2,...,y_K\}$ and a pre-trained CLIP-like model $(\mathcal{T}, \mathcal{I})$. If $Q_{\*x}$ satisfies Assumption~\ref{thm:assumption}, then there exists a constant $T = { \lambda(K - 1)\left(\lambda^{\text{wo}} + \delta - s_{\hat{y}_{2}}\right) \over K\lambda-1} $ such that for any temperature $\tau>T$,
we have
$$
\operatorname{FPR}(\tau, \lambda) \leq \operatorname{FPR}^{\text{wo}}(\lambda^{\text{wo}} ),
$$
where $\operatorname{FPR}(\tau, \lambda)$ is the false positive rate based on softmax scaling \emph{with} temperature $\tau$ and detection threshold $\lambda$; $ \operatorname{FPR}^{\text{wo}}(\lambda^{\text{wo}} )$ is the false positive rate \emph{without} softmax scaling based on threshold $\lambda^{\text{wo}}$. This suggests that applying softmax scaling with a moderate temperature results in superior OOD detection performance compared to that without softmax scaling. 
The proof is in Appendix~\ref{sec:proof}. Later in Section~\ref{sec:closer}, we empirically verify on a real-world ImageNet dataset that our bound can indeed be satisfied in CLIP where the thresholds are chosen at $95\%$ true positive rate. 
\end{theorem}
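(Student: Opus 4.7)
The plan is to prove the theorem by set inclusion on the support of $Q_{\*x}$. Since $\operatorname{FPR}(\tau,\lambda) = Q_{\*x}\!\left(S_{\text{MCM}}(\*x;\tau)\ge \lambda\right)$ and $\operatorname{FPR}^{\text{wo}}(\lambda^{\text{wo}}) = Q_{\*x}\!\left(s_{\hat{y}}(\*x)\ge \lambda^{\text{wo}}\right)$, it is enough to establish a pointwise implication $S_{\text{MCM}}(\*x;\tau)\ge \lambda \Rightarrow s_{\hat{y}}(\*x)\ge \lambda^{\text{wo}}$ that holds $Q_{\*x}$-almost surely whenever $\tau>T$. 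I will do this in two steps: first derive a deterministic upper bound on $S_{\text{MCM}}$ powered by Assumption~\ref{thm:assumption}, then perform an algebraic rearrangement that exposes the threshold $T$ stated in the theorem.

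For the upper bound, let $\bar s(\*x) := (K-1)^{-1}\sum_{i\neq \hat y}s_i(\*x)$. Assumption~\ref{thm:assumption} is exactly $\bar s(\*x) > s_{\hat{y}_2}(\*x) - \delta$ with $Q_{\*x}$-probability one. Convexity of $u\mapsto e^{u/\tau}$ (Jensen's inequality) then gives $\sum_{j\neq \hat y}e^{s_j/\tau}\ge (K-1)\,e^{\bar s/\tau} > (K-1)\,e^{(s_{\hat{y}_2}-\delta)/\tau}$, and substituting this into the denominator of the softmax yields the pointwise bound
\begin{equation*}
S_{\text{MCM}}(\*x;\tau)\;\le\;\frac{e^{s_{\hat y}/\tau}}{e^{s_{\hat y}/\tau}+(K-1)\,e^{(s_{\hat{y}_2}-\delta)/\tau}}.
\end{equation*}
Whenever $S_{\text{MCM}}(\*x;\tau)\ge \lambda$, this upper bound is also $\ge\lambda$; cross-multiplying (with $0<\lambda<1$) produces $(1-\lambda)\,e^{s_{\hat y}/\tau}\ge \lambda(K-1)\,e^{(s_{\hat{y}_2}-\delta)/\tau}$, i.e.\ $s_{\hat y}\ge s_{\hat{y}_2}-\delta+\tau\log\frac{\lambda(K-1)}{1-\lambda}$.

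For the threshold step, I want the right-hand side to dominate $\lambda^{\text{wo}}$, which demands $\tau\log\frac{\lambda(K-1)}{1-\lambda}\ge \lambda^{\text{wo}}+\delta-s_{\hat{y}_2}$. To match the algebraic form of $T$ in the theorem, I apply the elementary inequality $\log x\ge 1-1/x$ for $x>0$ at $x=\frac{\lambda(K-1)}{1-\lambda}$, which gives $\log\frac{\lambda(K-1)}{1-\lambda}\ge \frac{K\lambda-1}{\lambda(K-1)}$. In the regime $\lambda>1/K$ (where the logarithm is positive and the bound nontrivial), this converts the sufficient condition into $\tau>\frac{\lambda(K-1)(\lambda^{\text{wo}}+\delta-s_{\hat{y}_2})}{K\lambda-1}=T$, closing the set inclusion and hence the FPR inequality.

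The main obstacle I anticipate is the first step: Assumption~\ref{thm:assumption} only provides an \emph{average} (first-moment) gap, not a uniform bound on the individual off-target similarities, so one cannot simply replace the $K-1$ off-target exponents by $e^{(s_{\hat{y}_2}-\delta)/\tau}$ term-wise. Jensen's inequality is the right bridge because it lifts the arithmetic mean of exponents into a lower bound on the log-sum-exp that is exactly aligned with the form of the assumption. A secondary technical nuisance is loosening the tight logarithmic sufficient threshold into the algebraic $T$ stated in the theorem; this is harmless given $\log x\ge 1-1/x$, but does rely implicitly on $\lambda>1/K$ and $\lambda^{\text{wo}}+\delta-s_{\hat{y}_2}>0$ so that the resulting $T$ is positive and the manipulations are valid.
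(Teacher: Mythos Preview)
Your proposal is correct and arrives at the stated constant $T$, but the route differs from the paper's in the key bounding step. The paper linearizes the softmax directly via $e^{x}\ge 1+x$ applied to each term $\exp((s_i-s_{\hat y})/\tau)$, which immediately converts the event $\{S_{\text{MCM}}\ge\lambda\}$ into the linear condition $\sum_{i}(s_{\hat y}-s_i)>(K-1/\lambda)\tau$; the Assumption is then used to bound the linear sum by $(K-1)(s_{\hat y}-s_{\hat y_2}+\delta)$, yielding $s_{\hat y}>s_{\hat y_2}-\delta+\tau\frac{K\lambda-1}{\lambda(K-1)}$ directly, with no logarithm ever appearing. You instead apply Jensen to the off-target exponents first, obtain the tighter logarithmic implication $s_{\hat y}\ge s_{\hat y_2}-\delta+\tau\log\frac{\lambda(K-1)}{1-\lambda}$, and only then relax via $\log x\ge 1-1/x$ to recover the paper's algebraic form. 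In effect, your Jensen step and your $\log x\ge 1-1/x$ step together reproduce exactly what the paper gets in one shot from $e^{x}\ge 1+x$ (the two elementary inequalities are the same inequality read in different variables). What your detour buys is an explicitly tighter intermediate threshold $\tau\ge(\lambda^{\text{wo}}+\delta-s_{\hat y_2})/\log\frac{\lambda(K-1)}{1-\lambda}$ before the relaxation; what the paper's approach buys is brevity, since it never needs Jensen or a second inequality. Both arguments rely on the same side conditions you flag ($\lambda>1/K$, which the paper justifies by showing $S_{\text{MCM}}>1/K$ always, and positivity of $\lambda^{\text{wo}}+\delta-s_{\hat y_2}$), and both inherit the same looseness in the theorem statement regarding the $\*x$-dependence of $s_{\hat y_2}$ inside $T$.
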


\paragraph{What MCM offers:} Beyond theoretical insights, we would like to highlight several compelling advantages of our zero-shot OOD detection approach, owing to the strong pre-trained CLIP model:
\begin{itemize}[leftmargin=*,topsep=4pt,itemsep=4pt,parsep=0pt]
    \item \textbf{Generalizable to many tasks}: Traditional OOD detection methods are based on a task-specific model. As a result, the OOD detector is not suitable for a realistic online scenario where the task changes from one to another. In contrast, we will show in Section~\ref{sec:exp} that MCM can perform a wide variety of OOD detection tasks, with just one single model. For a new task, only the names of the task’s visual concepts $\mathcal{Y}_\text{in}$ are required. 
    \item \textbf{OOD-agnostic}:  Our method does not rely on any OOD  information, and thus suits many real-world scenarios where one cannot anticipate what the unknowns would be ahead of time. This also mitigates the shortcoming of a recent  approach~\cite{fort2021exploring}, which assumes that a set of unseen labels are given as some weak information about OOD data.  
    \item \textbf{Training-free}: MCM enables OOD detection in a zero-shot fashion. This stands in contrast to the vast majority of OOD detection literature, which often requires training from scratch or fine-tuning to achieve competitive performance. 
    \item \textbf{Scalable}: The contrastive vision-language pre-training paradigm makes MCM scalable to a large number of class labels and realistic high-resolution images.
\end{itemize}

We now proceed to the experimental results, demonstrating these advantages on real-world tasks.

\section{A Comprehensive Analysis of MCM}
\label{sec:exp}
\subsection{Datasets and Implementation Details}
\paragraph{Datasets.}
Most previous works on OOD detection only focus on small-scale datasets with blurry images such as CIFAR~\cite{krizhevsky2009learning} and TinyImageNet~\cite{le2015tiny}. With
pre-trained
models such as CLIP,
OOD detection can be  extended to more realistic and complex datasets.
In this work, we scale up evaluations in terms of (1) image resolution, (2) dataset variety, and (3) number of classes.  We consider the following ID datasets: \textsc{Cub-200}~\cite{WahCUB_200_2011}, \textsc{Stanford-Cars}~\cite{KrauseStarkDengFei-Fei_3DRR2013}, \textsc{Food-101}~\cite{bossard14},  \textsc{Oxford-Pet}~\cite{parkhi12a} and variants of \textsc{ImageNet}~\cite{deng2009imagenet}. For OOD test datasets, we use the same ones in~\cite{huang2021mos}, including subsets of iNaturalist~\cite{van2018inaturalist}, \textsc{Sun}~\cite{xiao2010sun}, \textsc{Places}~\cite{zhou2017places}, and \textsc{Texture}~\cite{cimpoi2014describing}. For each OOD dataset, the categories are not overlapping with the ID dataset.
We also use subsets of ImageNet-1k for
fine-grained
analysis. For example, we construct ImageNet-10 that mimics the class distribution of CIFAR-10 but with high-resolution images. For hard OOD evaluation, we curate ImageNet-20, which consists of 20 classes semantically similar  to ImageNet-10 (\eg dog (ID) vs. wolf (OOD)). 

\paragraph{Model.} In our experiments, we adopt CLIP~\cite{radford2021learning} as the target pre-trained model, which is one of the most popular and publicly available vision-language models. Note that our method is not limited to CLIP; it can generally be applicable for contrastive vision-language pre-training models that promote multi-modal feature alignment.
Specifically,
we mainly use CLIP-B/16, which consists of a ViT-B/16 Transformer as the image encoder and  a masked self-attention Transformer~\cite{vaswani2017attention} as the text encoder. To indicate the input patch size in ViT models, we append ``/x'' to model names. We prepend -B, -L to indicate \texttt{Base} and \texttt{Large} versions of the corresponding architecture. For instance, ViT-B/16 implies the \texttt{Base} variant with an input patch resolution of $16 \times 16$. We also use CLIP-L/14 which is based on ViT-L/14 as a representative of large models. Unless specified otherwise, the temperature $\tau$ is 1 for all experiments. Details of the datasets, experimental setup, and hyperparameters are provided in Appendix~\ref{sec:exp_detail}.

\paragraph{Metrics.} For evaluation,
we use the following metrics: (1) the false positive rate (\text{FPR}95) of OOD samples when the true positive rate of in-distribution samples is at 95\%, (2) the area under the receiver operating characteristic curve (AUROC), and (3) ID classification accuracy (ID ACC). 

\subsection{Main Results}

\paragraph{MCM supports a diverse collection of tasks while being zero-shot.} 
\begin{table}[tb]
\caption{\small Zero-shot OOD detection with MCM score based on CLIP-B/16 with various ID datasets.}
\label{tab:beyond}
\centering
\resizebox{1\textwidth}{!}{
\begin{tabular}{lcccccccccc}
\toprule
\multirow{3}{*}{\textbf{ID Dataset}} & \multicolumn{8}{c}{\textbf{OOD Dataset}}                        & \multicolumn{2}{c}{\multirow{2}{*}{\textbf{Average} }} \\
                        & \multicolumn{2}{c}{iNaturalist} & \multicolumn{2}{c}{SUN} & \multicolumn{2}{c}{Places} & \multicolumn{2}{c}{Texture} & \multicolumn{2}{c}{} \\            
             \cmidrule(lr){2-3}\cmidrule(lr){4-5}\cmidrule(lr){6-7}\cmidrule(lr){8-9}\cmidrule(lr){10-11} 
                        & \textbf{FPR95$\downarrow$}         & \textbf{AUROC$\uparrow$}      & \textbf{FPR95$\downarrow$}           & \textbf{AUROC$\uparrow$}         & \textbf{FPR95$\downarrow$}          & \textbf{AUROC$\uparrow$}        & \textbf{FPR95$\downarrow$}         & \textbf{AUROC$\uparrow$}      & \textbf{FPR95$\downarrow$}          & \textbf{AUROC$\uparrow$}     \\    
                        \midrule
                     \textbf{CUB-200}~\cite{WahCUB_200_2011} 
                     & 9.83&	98.24&	4.93&	99.10&	6.65&	98.57&	6.97&	98.75&	7.09&	98.66\\
\textbf{Stanford-Cars}~\cite{KrauseStarkDengFei-Fei_3DRR2013} 
&0.05&	99.77&	0.02&	99.95&	0.24&	99.89&	0.02&	99.96&	0.08&	99.89\\
\textbf{Food-101}~\cite{bossard14} 
& 0.64& 	99.78& 	0.90& 	99.75& 	1.86& 	99.58& 	4.04& 	98.62& 	1.86& 	99.43\\
\textbf{Oxford-Pet}~\cite{parkhi12a}
&2.85&	99.38&	1.06&	99.73&	2.11&	99.56&	0.80&	99.81&	1.70&	99.62\\
\textbf{ImageNet-10}
&0.12&	99.80&	0.29&	99.79&	0.88&	99.62&	0.04&	99.90&	0.33&	99.78\\
\textbf{ImageNet-20}
&1.02&	99.66&	2.55&	99.50&	4.40&	99.11&	2.43&	99.03&	2.60&	99.32\\
\textbf{ImageNet-100}&18.13	&96.77&	36.45&	94.54&	34.52&	94.36&	41.22	&92.25&	32.58&	94.48\\
\bottomrule
\end{tabular}
}
\end{table}
We first show that zero-shot OOD detection with MCM is effective across a wide variety of tasks---with just \emph{one single pre-trained model}. To showcase the versatility of MCM, we consider the seven ID datasets here. 
To the best of our knowledge, this is among the first attempts to showcase the efficacy under an expansive and diverse collection of ID datasets. The zero-shot OOD detection performance is summarized in Table~\ref{tab:beyond}. A salient observation is that MCM can achieve superior detection performance on many tasks. For example, using \textsc{Stanford-Cars} as ID, MCM yields an average FPR95 of \textbf{0.08}\%.
Considering that there are no training samples or OOD information involved, these results are very encouraging.

It can be also seen from Table~\ref{tab:beyond} that MCM is promising, especially when the number of samples per ID class is limited in the training set. For example, there are only around 40 samples per class for Stanford-Cars, 100 for Oxford-Pet, and 30 for CUB-200. The sample scarcity makes OOD detection methods that rely on fine-tuning difficult. For example, after fine-tuning on Food-101, while the ID accuracy is increased from $86.3\%$ to $92.5\%$ $\uparrow$, OOD detection based on MSP is on par with MCM ($99.5\%$ vs. $99.4\%$ in AUROC).

\begin{table}[t]
\caption{\small OOD detection performance for ImageNet-1k~\cite{deng2009imagenet} as ID.} %
\label{tab:1k}
\centering
\resizebox{\textwidth}{!}{
\begin{tabular}{lcccccccccc}
\toprule
\multirow{3}{*}{\textbf{Method}} & \multicolumn{8}{c}{\textbf{OOD Dataset}}                        & \multicolumn{2}{c}{\multirow{2}{*}{\textbf{Average} }} \\
                        & \multicolumn{2}{c}{iNaturalist} & \multicolumn{2}{c}{SUN} & \multicolumn{2}{c}{Places} & \multicolumn{2}{c}{Texture} & \multicolumn{2}{c}{}             \\
                        \cmidrule(lr){2-3}\cmidrule(lr){4-5}\cmidrule(lr){6-7}\cmidrule(lr){8-9}\cmidrule(lr){10-11}
                        & \textbf{FPR95$\downarrow$}         & \textbf{AUROC$\uparrow$}      & \textbf{FPR95$\downarrow$}           & \textbf{AUROC$\uparrow$}         & \textbf{FPR95$\downarrow$}          & \textbf{AUROC$\uparrow$}        & \textbf{FPR95$\downarrow$}         & \textbf{AUROC$\uparrow$}      & \textbf{FPR95$\downarrow$}          & \textbf{AUROC$\uparrow$}        \\  
                        \midrule
 &\multicolumn{9}{c}{\textbf{Requires training (or w. fine-tuning)}}          \\
MOS~\cite{huang2021mos} (BiT) & 9.28 & 98.15 & 40.63 & 92.01 & 49.54 & 89.06 & 60.43 & 81.23 & 39.97 & 90.11\\
 Fort et al.~\cite{fort2021exploring} (ViT-B) & 15.07&	96.64&	54.12&	86.37&	57.99&	85.24&	53.32&	84.77&	45.12&	88.25\\
Fort et al.~\cite{fort2021exploring} (ViT-L)&15.74&	96.51	&52.34&	87.32&	55.14&	86.48&	51.38&	85.54&	43.65&	88.96 \\
Energy~\cite{liu2020energy} (CLIP-B) & 21.59&	95.99& 34.28&	93.15 &36.64&	91.82& 51.18&	88.09& 35.92&	92.26 \\ 
Energy~\cite{liu2020energy} (CLIP-L) & 10.62&	97.52 &30.46&	93.83 &32.25&	93.01& 44.35&	89.64& 29.42&	93.50\\
MSP~\cite{hendrycks2016baseline} (CLIP-B)&40.89&	88.63&	65.81&	81.24&	67.90&	80.14&	64.96&	78.16&	59.89&	82.04\\
MSP~\cite{hendrycks2016baseline} (CLIP-L)&34.54&	92.62&	61.18&	83.68&	59.86&	84.10&	59.27&	82.31&	53.71&	85.68\\
\midrule
 &\multicolumn{9}{c}{\textbf{Zero-shot (no training required)}}   \\
\textbf{MCM} (CLIP-B) 
& 30.91	& 94.61	& 37.59& 	92.57& 	44.69& 	89.77& 	57.77& 	86.11& 	42.74& 	90.77\\
\textbf{MCM} (CLIP-L) 
&28.38&	94.95&	29.00&	94.14&	35.42&	92.00&	59.88&	84.88&38.17&	91.49\\
\bottomrule
\end{tabular}
}
\end{table}

\paragraph{MCM scales effectively to large datasets.} To examine the scalability of MCM, we compare it with recent competitive OOD detection methods~\cite{fort2021exploring,huang2021mos} on the ImageNet-1k dataset (ID) in Table~\ref{tab:1k}.
We observe the following trends: 
\begin{itemize}[leftmargin=*,topsep=4pt,itemsep=4pt,parsep=0pt]
    \item Larger models lead to superior performance. Compared with CLIP-B, MCM based on CLIP-L reduces FPR95 by $4.57\%$. Zero-shot ID classification accuracy is also improved by $6.27\%$ with the larger model, reaching $73.28\%$ (see Appendix~\ref{sec:acc}).  This suggests that larger models are endowed with a better representation quality, which benefits both ID classification and OOD detection with MCM. Our finding echos with the recent observations~\cite{vaze2021open} that higher ID classification accuracy is correlated with stronger OOD detection performance.
    \item  MOS~\cite{huang2021mos} recently demonstrated competitive performance on ImageNet-1k, which requires model fine-tuning based on BiT~\cite{kolesnikov2020big}. In contrast, we show that MCM (CLIP-L) outperforms MOS by $1.38\%$ in AUROC while being zero-shot (training-free).
    \item MCM shares a softmax scaling function with the classic (visual) confidence-based score MSP~\cite{hendrycks2016baseline}. To implement MSP, we adopt the commonly used linear probe approach by fine-tuning a linear layer on frozen visual features of CLIP. After fine-tuning, ID accuracy significantly improves, reaching $84.12\%$ (CLIP-L). Interestingly, the OOD detection performance of MSP is worse than MCM by $15.54\%$ in FPR95. Under the same model fine-tuned with linear probing, we observe that the Energy score outperforms MSP, corroborating findings in~\cite{liu2020energy}. We investigate more in Section~\ref{sec:closer}. 
    \item Recently,  Fort~\etal~\cite{fort2021exploring} explore small-scale OOD detection by fine-tuning the full ViT model. When extended to large-scale tasks, we find that MCM still yields superior performance under the same image encoder configuration (ViT-B or ViT-L). This further highlights the advantage of utilizing vision-language joint embeddings for large-scale visual OOD detection.
\end{itemize}

\paragraph{MCM benefits hard OOD detection.} Going beyond, we investigate whether MCM is still effective for hard OOD inputs. We consider the following two categories of hard OOD:

\begin{itemize}[leftmargin=*,topsep=4pt,itemsep=4pt,parsep=0pt]
    \item \textbf{Semantically hard OOD}: OOD samples that are semantically similar to ID samples are particularly challenging for OOD detection algorithms~\cite{winkens2020contrastive}. To evaluate hard OOD detection tasks in realistic settings, here we consider ImageNet-10 (ID) vs. ImageNet-20 (OOD) and vice versa. The pair consists of high-resolution images with semantically similar categories such as dog versus wolf. As shown in Table~\ref{tab:hard}, MCM outperforms Mahalanobis~\cite{lee2018simple} by $\textbf{73.32\%}$ in FPR95 for ImageNet-10 (ID) vs. ImageNet-20 (OOD) and $\textbf{30.12\%}$ vice versa.  
    \item \textbf{Spurious OOD}: Modern neural networks can exploit spurious correlations for predictions~\cite{beery2018recognition}. For example, in the Waterbirds dataset~\cite{sagawa2019distributionally}, there exist spurious correlations between the habitat (\eg water) and bird types. A recent work~\cite{ming2022spurious} proposes a new type of hard OOD named spurious OOD and shows that most OOD detection methods perform much worse for spurious OOD inputs compared to non-spurious inputs.  The spurious OOD inputs are created to share the same background (\ie water) as ID data but have different object labels (\eg a boat rather than a bird). See Appendix~\ref{sec:spurious} for illustrations. The results are shown in Table~\ref{tab:hard}. It has been shown that CLIP representations are robust to distributional shifts~\cite{radford2021learning}. Therefore, while prior works~\cite{ming2022spurious} show that spurious OOD inputs are challenging for methods based on ResNet~\cite{he2016deep}, MCM and Mahalanobis scores based on pre-trained CLIP perform much better. On the other hand, fine-tuning exposes the model to the training set containing spurious correlations. As a result, MSP performs much worse than MCM ($39.57\%$ vs. $5.87\%$ in FPR95).

\end{itemize}

\begin{table}[t]
\centering
\caption{\small Performance comparison on  \textbf{hard OOD detection} tasks. MCM is competitive on all three hard OOD tasks without training involved. MSP (based on fine-tuned CLIP) does not further improve performance.}
\resizebox{0.95\textwidth}{!}{
\small
\begin{tabular}{llccc}
\toprule
 \multirow{2}{*}{\textbf{Method}} & \textbf{ID} & ImageNet-10 & ImageNet-20 & Waterbirds \\
 & \textbf{OOD}        &  ImageNet-20 &  ImageNet-10 & Spurious OOD  \\
 \midrule
 && FPR95 / AUROC & FPR95 / AUROC & FPR95 / AUROC\\
MSP~\cite{hendrycks2016baseline} (fine-tuning)  & & 9.38 / 98.31 & 12.51 / 97.70 & 39.57 / 90.99 \\
Mahalanobis~\cite{lee2018simple} (visual only) &    &  78.32 / 85.60&  43.03 / 89.94& 2.21 / 99.55\\
MCM (zero-shot)    & & {5.00} / {98.71} & 12.91 / {98.09} & 5.87 / 98.36    \\
\bottomrule
\end{tabular}
}
\vspace{-1mm}
\label{tab:hard}
\end{table}

\vspace{-3mm}
\paragraph{MCM outperforms CLIP-based baselines.} 
\begin{wrapfigure}[15]{r}{0.38\textwidth}
    \small
    \vspace{-4mm}
    \begin{center}
    \includegraphics[width=0.38\textwidth]{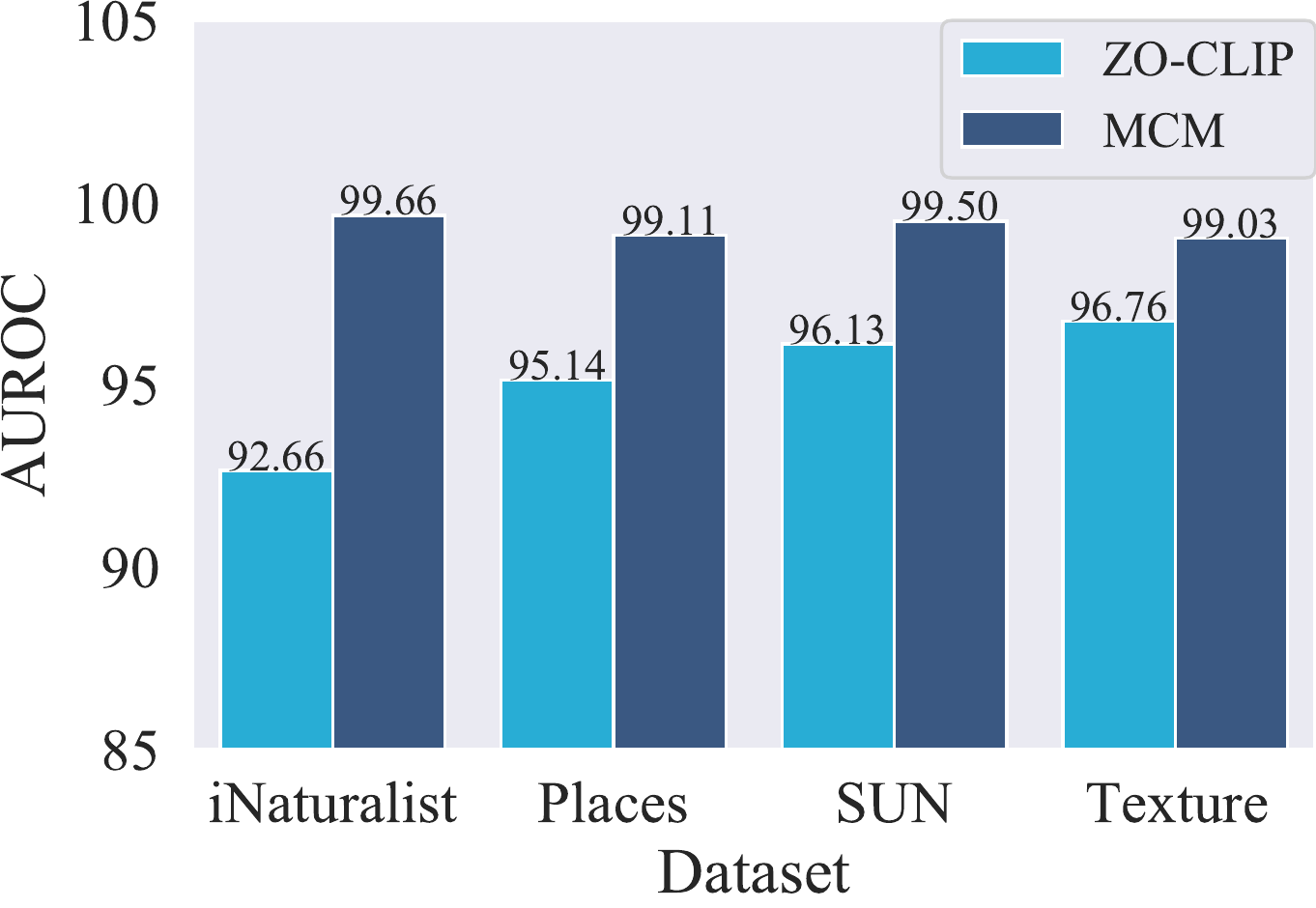}
    \end{center}
    \vspace{-2.5mm}
    \caption{Comparison with a candidate label-based score ZO-CLIP on ImageNet-20, based on our implementation of~\cite{esmaeilpour2022zero}. Implementation details are deferred to Appendix~\ref{sec:candidate_label}.}
    \vspace{-4mm}
    \label{fig:label_based}
\end{wrapfigure}
Two recent works also use CLIP embeddings for OOD detection~\cite{esmaeilpour2022zero,fort2021exploring}. However, fundamental limitations exist for both works. Fort \emph{et al.}~\cite{fort2021exploring} assume that a candidate OOD label set $\mathcal{Y}_{C}$ is known, and used $ \sum_{y\in \mathcal{Y}_C}\hat{p}(y|\*x)$ for OOD detection. Here the predictive probability $\hat p(y|\*x)$ is obtained by normalizing the inner products over $|\mathcal{Y}_\text{in}|+|\mathcal{Y}_C|$ classes. While applying softmax converts any vector to probabilities, as we show in Section~\ref{sec:method}, the converted probabilities do not necessarily correspond to $\mathbb{P}(\text{OOD}|\*x)$. 
Moreover, obtaining such an OOD label set is typically not feasible, which fundamentally limits its applicability.  A recent work~\cite{esmaeilpour2022zero} realizes this idea by training an extra text decoder on top of CLIP’s image encoder to generate candidate labels.
However, \cite{esmaeilpour2022zero} cannot guarantee the generated labels are non-overlapping with the ID labels. 

We enhance the baseline with a stronger decoder and a filter module (see Appendix~\ref{sec:candidate_label}).
As shown in Figure~\ref{fig:label_based}, MCM outperforms the enhanced baseline on all OOD datasets. Moreover, MCM is much simpler to use---alleviating the need for an OOD label set or training {an} additional caption generator.
{In contrast, the caption generator's performance largely affects OOD detection. Poor caption quality degenerates the OOD detection performance of candidate label-based methods. Moreover, obtaining a reliable {caption generator} for \emph{any input image} can significantly increase the computational overhead.}

\section{Discussion: A Closer Look at MCM} 
\label{sec:closer}
\paragraph{Empirical verification on the role of softmax.} In Section~\ref{sec:method}, we prove that softmax scaling on cosine similarity scores with a moderate $\tau$ improves the ID-OOD separability. Here we empirically verify our theoretical results. As shown in Figure~\ref{fig:temp}, compared to directly using the maximum cosine similarity without softmax (leftmost figure), softmax scaling with a temperature $\tau = 1$ significantly improves the performance by $22.6\%$ in FPR95, and further increasing $\tau$ (\eg $\tau = 10$) leads to similar performance. The results are based on ImageNet-100 (ID) versus iNaturalist (OOD).

Now, we verify if our theoretical bound (\emph{c.f.} Theorem~\ref{thm:main}) is satisfied empirically as well in Figure~\ref{fig:temp}. From the leftmost figure, we can estimate $\lambda^{\text{wo}} \approx 0.26$, $\delta \approx 0.03$, and $s_{\hat{y}_2} \approx 0.23$. By checking the third figure ($\tau = 1$ is the temperature value we use for most experiments), we approximate $\lambda \approx 0.011$. As $K = 100$, we plug in the values and obtain the lower bound $T = { \lambda(K - 1)\left(\lambda^{\text{wo}} + \delta - s_{\hat{y}_{2}}\right) \over K\lambda-1} \approx 0.65$. Since $\tau  = 1 > 0.65$, by Theorem~\ref{thm:main}, applying softmax scaling with $\tau=1$ is provably superior to without softmax scaling for OOD detection.

\begin{figure*}[t]
  \centering
    \includegraphics[width=0.95\linewidth]{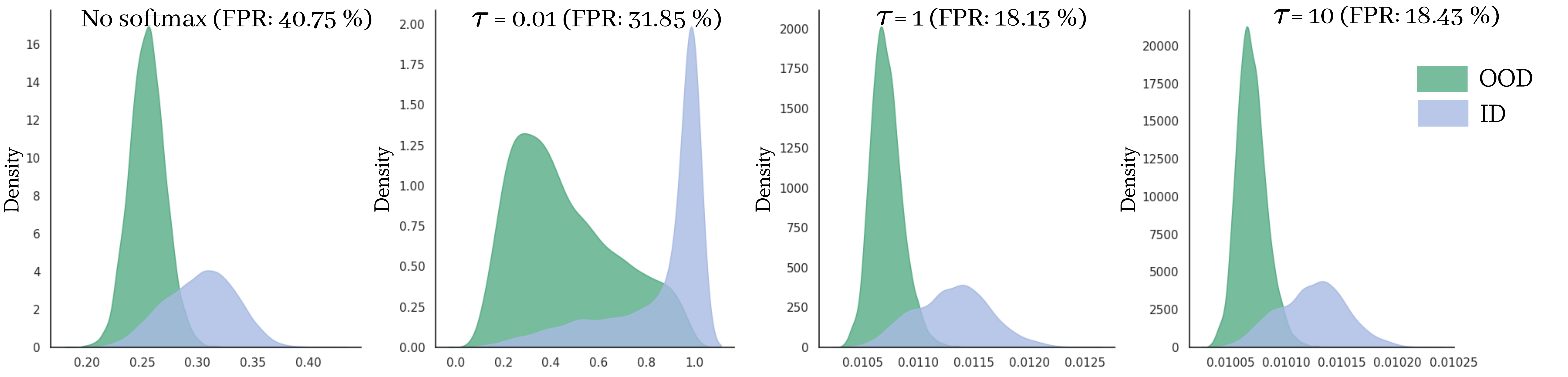}
    \vspace{-0.2cm}
\caption{The influence of softmax scaling and temperature. We use ImgeNet-100 (ID) vs. iNaturalist (OOD). Softmax scaling with a moderate temperature significantly improves FPR95.}
\label{fig:temp}
\vspace{-3mm}
\end{figure*}

\begin{wrapfigure}[15]{r}{0.4\textwidth}
    \vspace{-3mm}
    \begin{center}
    \includegraphics[width=0.4\textwidth]{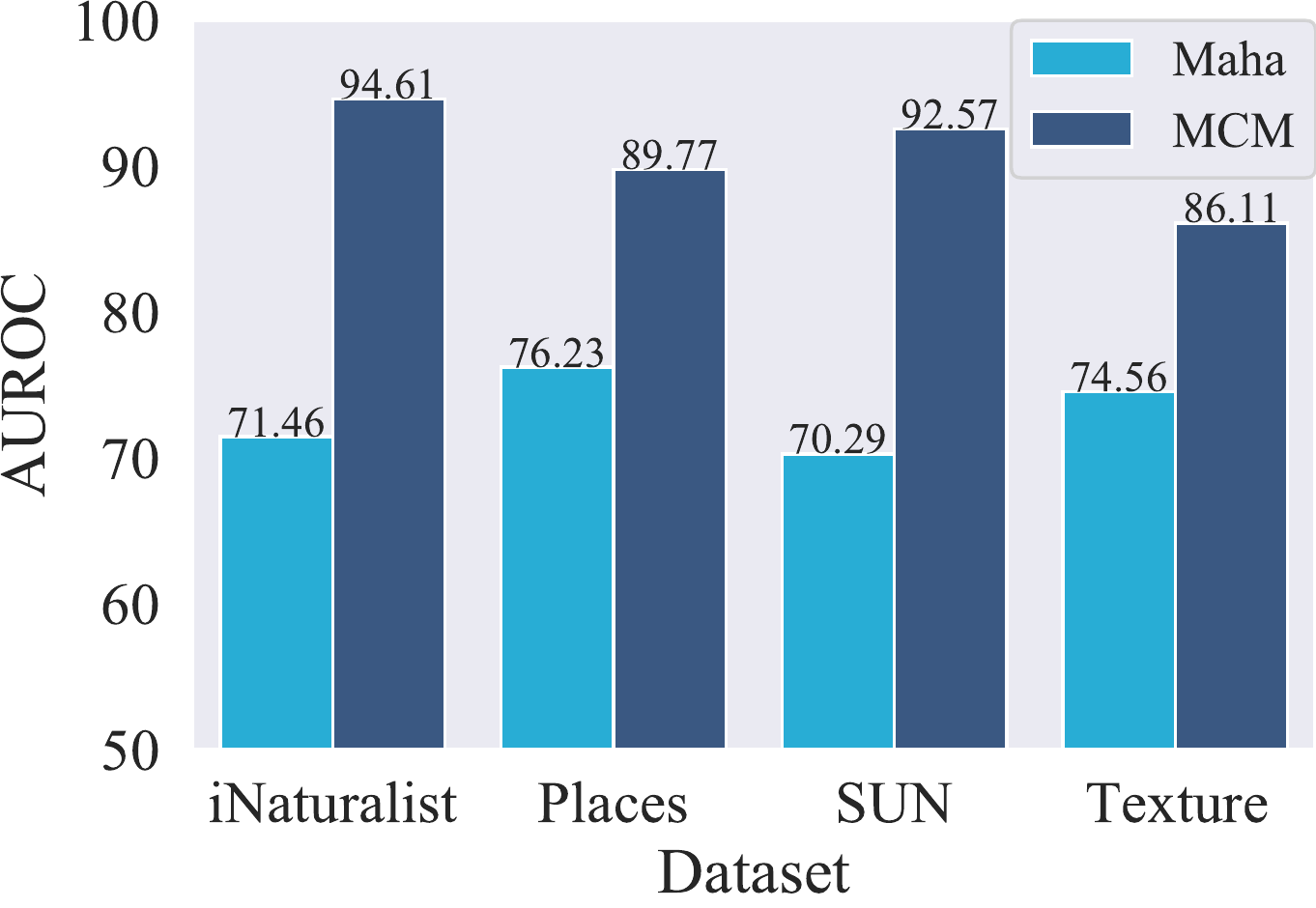}
    \end{center}
    \vspace{-2mm}
    \caption{Comparison with Mahalanobis ({Maha}) score on ImageNet-1k.}
    \vspace{-3mm}
    \label{fig:mcm_maha}
\end{wrapfigure}

\paragraph{Are vision-language features better than visual feature alone?}
MCM can be interpreted as a distance-based approach---images that are closer to one of the $K$ class prototypes are more likely to be ID and vice versa. Here the class prototypes are defined based on a textual encoder. Alternatively, one can define the class prototypes based on visual features. For example, Mahalanobis~\cite{lee2018simple} defines a class prototype as the average of visual embeddings for images belonging to the same class. This raises the question whether MCM (with \emph{multi-modal} vision-language features) is better than Mahalanobis (with \emph{single-modal} visual feature). For a fair comparison, we use the same ViT image encoder from CLIP-B. Both MCM and Mahalanobis extract visual features from the penultimate layer. On ImageNet-1k, Mahalanobis displays a limited performance, with $73.14\%$ AUROC averaged across four OOD test datasets ($\textbf{90.77\%}$ for MCM), as shown in Figure~\ref{fig:mcm_maha}. From a practical perspective, Mahalanobis requires computing the inverse covariance matrix, which can be both computationally expensive and inaccurate when the number of samples is scarce and the number of ID classes grows.
{In contrast, MCM is easier to use and more robust.}

\paragraph{MCM without softmax scaling.}
In Section~\ref{sec:method}, we provide theoretical justifications for the necessity of softmax scaling for CLIP-like models. To further verify our observations empirically, we show OOD detection performance based on the maximum cosine similarity score $S_{\text{MCM}}^{\text{wo}}(\*x';\mathcal{Y}_\text{in},\mathcal{T},\mathcal{I}) = \max_{i\in[K]} s_i(\*x')$. The results are shown in Table~\ref{tab:no_softmax}. For easy tasks such as Food-101~\cite{KrauseStarkDengFei-Fei_3DRR2013}, Stanford-Cars~\cite{KrauseStarkDengFei-Fei_3DRR2013}, and Oxford-Pet~\cite{parkhi12a} as ID, the performance of maximum cosine similarity score is similar to MCM (see Table~\ref{tab:beyond} and Table~\ref{tab:1k}). However, for more challenging tasks such as ImageNet-20 and ImageNet-1k, MCM significantly outperforms that without softmax scaling. For example, the average FPR95 is improved by $\textbf{11.33\%}$ on ImageNet-20 and $\textbf{26.34\%}$ on ImageNet-1k, which highlights the necessity of a proper scaling function for CLIP-based OOD detection.

\begin{table}[t]
\caption{\small Zero-shot OOD detection of $S_{\text{MCM}}^{\text{wo}}$ based on CLIP-B/16.}
\label{tab:no_softmax}
\centering
\resizebox{1\textwidth}{!}{{
\begin{tabular}{lcccccccccc}
\toprule
\multirow{3}{*}{\textbf{ID Dataset}} & \multicolumn{8}{c}{\textbf{OOD Dataset}}                        & \multicolumn{2}{c}{\multirow{2}{*}{\textbf{Average} }} \\
                        & \multicolumn{2}{c}{iNaturalist} & \multicolumn{2}{c}{SUN} & \multicolumn{2}{c}{Places} & \multicolumn{2}{c}{Texture} & \multicolumn{2}{c}{}             \\
                        \cmidrule(lr){2-3}\cmidrule(lr){4-5}\cmidrule(lr){6-7}\cmidrule(lr){8-9}\cmidrule(lr){10-11}
                        & \textbf{FPR95$\downarrow$}         & \textbf{AUROC$\uparrow$}      & \textbf{FPR95$\downarrow$}           & \textbf{AUROC$\uparrow$}         & \textbf{FPR95$\downarrow$}          & \textbf{AUROC$\uparrow$}        & \textbf{FPR95$\downarrow$}         & \textbf{AUROC$\uparrow$}      & \textbf{FPR95$\downarrow$}          & \textbf{AUROC$\uparrow$}     \\     
                        \midrule
       
\textbf{Stanford-Cars}~\cite{KrauseStarkDengFei-Fei_3DRR2013} 
&0.00	&100&	0.02&	99.99&	0.26&	99.94&	0.00	&100&	0.07&	99.98\\
\textbf{Food-101}~\cite{bossard14}
&0.56&	99.86&	0.09&	99.95&	0.49&	99.88&	8.33&	97.44&	2.37&	99.28\\
\textbf{Oxford-Pet}~\cite{parkhi12a}
&0.02&	99.98&	0.05&	99.97&	0.20&	99.94&	0.27&	99.91&	0.14&	99.95\\
\textbf{ImageNet-10}
&2.40&	99.42&	1.79&	99.55&	2.83&	99.32	&1.86&	99.56&	2.22	&99.46\\
\textbf{ImageNet-20}
&14.96&	97.87&	13.10&	97.97	&14.21&	97.67&	13.46&	97.32&	13.93&	97.71\\
\textbf{ImageNet-1k}
&61.66&	89.31&	64.39&	87.43&	63.67&	85.95&	86.61&	71.68&	69.08&	83.59\\
\bottomrule
\end{tabular}}
}
\end{table}

\paragraph{MCM for ResNet-based CLIP models.}
Our main results are based on the CLIP model with ViT image encoder. We additionally investigate the effectiveness of MCM on ResNet-based CLIP. Specifically, we use RN50x4 (178.3M), which shares a similar number of parameters as CLIP-B/16 (149.6M). The results are shown in Table~\ref{tab:resnet}.
We can see that MCM still shows promising results with ResNet-based CLIP models, and the performance is comparable between RN50x4 and CLIP-B/16  ($89.97$ vs. $90.77$ in AUROC).
\begin{table}[hbt]
\caption{Comparison with ResNet-based CLIP models on ImageNet-1k (ID).}
\label{tab:resnet}
\centering
\resizebox{\textwidth}{!}{{
\begin{tabular}{lcccccccccc}
\toprule
\multirow{3}{*}{\textbf{Model}} & \multicolumn{8}{c}{\textbf{OOD Dataset}}                        & \multicolumn{2}{c}{\multirow{2}{*}{\textbf{Average} }} \\
                        & \multicolumn{2}{c}{iNaturalist} & \multicolumn{2}{c}{SUN} & \multicolumn{2}{c}{Places} & \multicolumn{2}{c}{Texture} & \multicolumn{2}{c}{}             \\
                        \cmidrule(lr){2-3}\cmidrule(lr){4-5}\cmidrule(lr){6-7}\cmidrule(lr){8-9}\cmidrule(lr){10-11}
                        & \textbf{FPR95$\downarrow$}         & \textbf{AUROC$\uparrow$}      & \textbf{FPR95$\downarrow$}           & \textbf{AUROC$\uparrow$}         & \textbf{FPR95$\downarrow$}          & \textbf{AUROC$\uparrow$}        & \textbf{FPR95$\downarrow$}         & \textbf{AUROC$\uparrow$}      & \textbf{FPR95$\downarrow$}          & \textbf{AUROC$\uparrow$}        \\  
                        \midrule

RN50x4 & 44.51 &91.51  &35.11 &92.84  & 43.74 &89.60  & 57.73 &85.93  & 45.27 &89.97 \\
CLIP-B/16 & 30.91	& 94.61	& 37.59& 	92.57& 	44.69& 	89.77& 	57.77& 	86.11& 	42.74& 	90.77\\
\bottomrule
\end{tabular}}
}
\end{table}

\begin{wraptable}{r}{5cm}
\small
\vspace{-3mm}
\begin{tabular}{l}
\toprule
A photo of a <label>.\\
A blurry photo of a <label>.\\
A photo of many <label>.\\
A photo of the large <label>.\\
A photo of the small <label>.\\
\bottomrule
\end{tabular}
\caption{The five prompt templates.}
\label{tab:prompt}
\vspace{-3mm}
\end{wraptable}
\paragraph{Effect of prompt ensembling.}
We examine MCM's performance with prompt ensembling. For example, Radford~\emph{et al.}~\cite{radford2021learning} create 80 possible prompts according to the image modalities and nuances in ImageNet. We experiment with the two prompt sets, one of size 80 as in~\cite{radford2021learning}, and our own set of 5 prompts. Ensembles are obtained by averaging the textual features. As expected, using ensembles increases the ID classification accuracy on ImageNet-1k (2\% with CLIP-B and 3\% with CLIP-L). 
For OOD detection, the average FPR95 is reduced from 38.17\% with the default prompt to 35.23\%{$\downarrow$} with an ensemble of five prompts shown in Table~\ref{tab:prompt}. In addition, the detection performance with 5 prompts is slightly better than with 80 prompts. Note that prompt ensembling does not increase the inference-time cost, as the textual embeddings (across many prompts) can be pre-calculated and averaged into a single embedding.

\section{Related Works}
\label{sec:related_works}
\noindent\textbf{OOD detection in computer vision.} For open-world multi-class classification, the goal of OOD detection is to derive a binary ID-OOD classifier along with a multi-class classification model for visual inputs. A plethora of methods has been proposed for deep neural networks~\cite{yang2021generalized}, including generative model-based methods~\cite{cai2023frequency,ge2017generative,kirichenko2020normalizing,nalisnick2019deep,neal2018open,oza2019c2ae,ren2019likelihood,serra2019input,xiao2020likelihood}, and discriminative-model based methods. For the latter category, an OOD score can be derived based on the softmax output~\cite{openmax16cvpr,2018onemore,hein2019why,hendrycks2016baseline,hsu2020generalized,huang2021mos,liang2018enhancing,yang2021scood}, energy-based score~\cite{du2022vos,liu2020energy,ming2022posterior,sun2021tone,sun2022dice,wang2021canmulti}, gradient information~\cite{huang2021importance}, or the feature embeddings~\cite{du2022siren,lee2018simple,2020gram,2021ssd,sun2022knn,tack2020csi,winkens2020contrastive} of a model. Morteza~\emph{et al.}~\cite{morteza2022provable}, Fang~\emph{et al.}~\cite{fang2022learnable}, and Bitterwolf~\emph{et al.}~\cite{bitterwolf2022breaking} provided theoretical analysis for OOD detection. Recent works~\cite{roy2022does,wang2022partial} also explored OOD detection for long-tailed distributions. Works insofar have mostly focused on OOD detection for a task-specific model using only visual information. In contrast, we explore a novel paradigm of zero-shot OOD detection that incorporates rich textual information and can perform a wide variety of tasks. 

\noindent\textbf{OOD detection in natural language processing.} Distribution shifts can occur due to the change of topics and domains, unexpected user utterances,~\etc. {Challenging}
benchmarks~\cite{koh2021wilds} and characterization of distributional shifts~\cite{arora-etal-2021-types} {have been} proposed in recent years.  Compared to early language models such as ConvNets and LSTM~\cite{hochreiter1997long}, pre-trained language models are more robust to distribution shifts and more effective at identifying OOD instances~\cite{hendrycks-etal-2020-pretrained,Podolskiy21,xu-etal-2021-unsupervised}. Various algorithmic solutions are proposed to handle OOD detection, including outlier exposure~\cite{hu2021uncertainty}, model ensembling~\cite{lietal2021kfolden}, data augmentation~\cite{chen2021gold, zhan2021out, zheng2020out}, contrastive learning~\cite{jin2022towards,zhou2021contrastive}, and an auxiliary module that incorporates domain labels~\cite{shen2021enhancing}. Tan~\emph{et al.}~\cite{Tan2019OutofDomainDF} also explore zero-shot OOD detection for text classification tasks. However, prior works focus on pure natural language processing (NLP) settings, while we explore utilizing textual embeddings for zero-shot \emph{visual} OOD detection.

\paragraph{Vision-language models.} Utilizing large-scale pre-trained vision-language models for multimodal downstream tasks has become an emerging paradigm with remarkable performance~\cite{gu2020self,uppal2022multimodal}. In general, two types of architectures exist: single-stream models like VisualBERT~\cite{li2019visualbert} and ViLT~\cite{kim2021vilt}
{feed the concatenated text and visual features}
into a single transformer-based encoder; dual-stream models such as CLIP~\cite{radford2021learning}, ALIGN~\cite{jia2021scaling}, and FILIP~\cite{yao2021filip} use separate encoders for text and image and optimize with contrastive objectives to align semantically similar features in different modalities. In particular, CLIP enjoys popularity due to its simplicity and strong performance. CLIP-like models inspire numerous follow-up works~\cite{li2021supervision,zhang2021tip,zhou2022cocoop}, which aim to improve data efficiency and better adaptation to downstream tasks. This paper adopts CLIP as the target pre-trained model, but our approach can be generally applicable to contrastive models that promote vision-language alignment.

\paragraph{Multi-modal OOD detection.} 
Exploring textual information for visual OOD detection is a new area with limited existing works. Fort~\emph{et al.}~\cite{fort2021exploring} propose to feed the potential OOD labels to the textual encoder of CLIP~\cite{radford2021learning}. Recently, Esmaeilpour~\emph{et al.}~\cite{esmaeilpour2022zero} propose to train a label generator based on the visual encoder of CLIP and use the generated labels for OOD detection. While both works rely on a set of candidate OOD labels, MCM is OOD-agnostic and alleviates the need for prior information on OOD. Moreover, prior works~\cite{esmaeilpour2022zero,radford2021learning} only focus on small-scale inputs. We largely expand the scope to a wide range of large-scale realistic datasets, and show new theoretical insights.

\section{Conclusion}
In this work, we delve into a new landscape for OOD detection, departing from the classic single-modal toward a multi-modal regime. By viewing the textual features as the ``concept prototypes'', we explore a new OOD detection approach MCM, based on the joint vision-language representations. Unlike the majority of OOD detection methods, MCM offers several compelling advantages: training-free, generalizable to many tasks, scalable to hundreds of classes, and does not require any prior information on OOD inputs. Moreover, we provide theoretical guarantees on how softmax scaling provably improves zero-shot OOD detection. We investigate the effectiveness of MCM on a wide range of large-scale realistic tasks, including several types of hard OOD datasets.  Lastly, we demonstrate the advantage of vision-language features over pure visual features for OOD detection. We hope our work will inspire future research toward multi-modal OOD detection.

\section*{Acknowledgement}
The authors wish to thank Junjie Hu, Ying Fan, Ruisu Zhang, Andrew Geng, and Soumya Suvra Ghosal for the helpful discussions.  The work is supported by a Google-Initiated Research Grant, and gift funding from Adobe Research. 

\bibliography{clip_ood}
\bibliographystyle{plain}

\medskip

\newpage


\appendix
\section{Theoretical Justification: Softmax Scaling for Zero-Shot OOD Detection}
\label{sec:proof}
In this section, we provide the proof for Theorem~\ref{thm:main} in Section~\ref{sec:method}, which states the benefits of applying softmax scaling to inner products for OOD detection. We begin with a review of notations.

\paragraph{Notations.} We denote the text encoder of a pre-trained CLIP-like model as $\mathcal{T}: t \rightarrow \mathbb{R}^d$ and the image encoder $\mathcal{I}: \*x \rightarrow \mathbb{R}^d$. For a given task with label set $\mathcal{Y}_\text{in}=\{y_1, y_2,...,y_K\}$, we construct a collection of concept vectors $\mathcal{T}(t_i)$. For a given input $\*x'$, we denote the cosine similarity \emph{w.r.t.} concept vectors as $s_i(\*x') = \frac{\mathcal{I}(\*x') \cdot \mathcal{T}(t_i)}{\lVert \mathcal{I}(\*x')\rVert \cdot \lVert \mathcal{T}(t_i) \rVert} \, \forall i\in[K]$, where $|s_i(\*x')| \leq B$ for all $\*x'\in \mathcal{X}$.\footnote{In practice, we observe that $s_i\in[0.1, 0.3]$ for CLIP with high probability.}. We define the maximum concept matching (MCM) score as: $S_{\text{MCM}}(\*x';\mathcal{Y}_\text{in},\mathcal{T},\mathcal{I}) = \max_{i\in[K]} \frac{e^{s_i(\*x')/\tau}}{\sum_{j=1}^K e^{s_j(\*x')/\tau}}$. We denote the maximum inner product without applying softmax scaling as $S_{\text{MCM}}^{\text{wo}}(\*x';\mathcal{Y}_\text{in},\mathcal{T},\mathcal{I}) = \max_{i\in[K]} s_i(\*x')$. By convention, the OOD detection functions are given by:
\begin{align*}
	G^{\text{wo}}(\*x';\mathcal{Y}_\text{in},\mathcal{T},\mathcal{I}) =\begin{cases} 
      1 & S_{\text{MCM}}^{\text{wo}}(\*x';\mathcal{Y}_\text{in},\mathcal{T},\mathcal{I})\ge \lambda^{\text{wo}} \\
      0 & S_{\text{MCM}}^{\text{wo}}(\*x';\mathcal{Y}_\text{in},\mathcal{T},\mathcal{I}) < \lambda^{\text{wo}} 
  \end{cases},
\end{align*}
\begin{align*}
	G(\*x';\mathcal{Y}_\text{in},\mathcal{T},\mathcal{I}) =\begin{cases} 
      1 & S_{\text{MCM}}(\*x';\mathcal{Y}_\text{in},\mathcal{T},\mathcal{I})\ge \lambda \\
      0 & S_{\text{MCM}}(\*x';\mathcal{Y}_\text{in},\mathcal{T},\mathcal{I}) < \lambda 
  \end{cases},
\end{align*}

\paragraph{Remarks:} By convention, $1$ represents the positive class (ID) and $0$ indicates OOD; $\lambda$ and $\lambda^{\text{wo}}$ are typically chosen such that the true positive rate is at $95\%$.

For convenience, we paste the assumptions and the theorem in Section~\ref{sec:method} below,

\begin{assumption}
\label{thm:sep} Let $z := \mathbbm{1}\{y \in \mathcal{Y}_\text{in}\}$ and $Q_{\*x}$ denotes the out-of-distribution $\mathbb{P}_{\*x \mid z = 0}$ (marginal distribution of $\*x$ conditioned on $z = 0$). Assume $\exists\,\delta>0$ such that
$$
Q_{\*x}\left({1\over K-1}\sum_{i \neq \hat{y}}\left[s_{\hat{y}_{2}}(\*x)-s_{i}(\*x)\right]< \delta\right)=1,
$$
where $\hat y := \text{argmax}_{i\in[K]} s_i(\*x)$ and $\hat y_2 := \text{argmax}_{i\neq \hat y, i\in[K]} s_i(\*x)$ denote the indices of the largest and second largest cosine similarities for an OOD input $\*x$. 
\end{assumption}

\begin{theorem}
Given a pre-trained CLIP-like model $(\mathcal{T}, \mathcal{I})$ and a task with label set $\mathcal{Y}_\text{in}=\{y_1, y_2,...,y_K\}$. If $Q_{\*x}$ satisfy Assumption~\ref{thm:sep}, Then there exists a constant $T = { \lambda(K - 1)\left(\lambda^{\text{wo}} + \delta - s_{\hat{y}_{2}}\right) \over K\lambda-1} $ such that for any temperature $\tau>T$,
we have:
$$
\operatorname{FPR}(\tau, \lambda) \leq \operatorname{FPR}^{\text{wo}}(\lambda^{\text{wo}} ),
$$
where $\operatorname{FPR}(\tau, \lambda)$ is the false positive rate based on softmax scaling with temperature $\tau$ and threshold $\lambda$; $ \operatorname{FPR}^{\text{wo}}(\lambda^{\text{wo}} )$ is the false positive rate without softmax scaling based on threshold $\lambda^{\text{wo}}$. This suggests that applying softmax scaling with temperature results in superior OOD detection performance compared to without softmax scaling. 
\end{theorem}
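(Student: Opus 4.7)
The plan is to establish the FPR inequality via a stronger \emph{pointwise} event inclusion: for every $\*x$ in the support of $Q_{\*x}$, whenever $S_{\text{MCM}}(\*x;\tau)\geq \lambda$ we also have $S_{\text{MCM}}^{\text{wo}}(\*x) = s_{\hat y}(\*x) \geq \lambda^{\text{wo}}$. Monotonicity of $Q_{\*x}$ applied to this inclusion yields $\operatorname{FPR}(\tau,\lambda)\leq \operatorname{FPR}^{\text{wo}}(\lambda^{\text{wo}})$ immediately. I would prove the pointwise statement in contrapositive form: assume $s_{\hat y}(\*x) < \lambda^{\text{wo}}$, and derive $S_{\text{MCM}}(\*x;\tau) < \lambda$ for every $\tau > T$.

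The first step is purely algebraic: $S_{\text{MCM}}(\*x;\tau) \leq \lambda$ is equivalent to $(1-\lambda) e^{s_{\hat y}/\tau} \leq \lambda\sum_{i\neq \hat y} e^{s_i/\tau}$. I would then apply Jensen's inequality to the convex exponential on the right, giving the lower bound $(K-1) e^{\bar s/\tau}$ with $\bar s := \frac{1}{K-1}\sum_{i\neq \hat y} s_i(\*x)$. Hence it suffices to verify $(1-\lambda) e^{s_{\hat y}/\tau} \leq \lambda(K-1) e^{\bar s/\tau}$, which after taking logarithms collapses to the linear condition
\begin{equation*}
s_{\hat y} - \bar s \;\leq\; \tau\,\log\frac{\lambda(K-1)}{1-\lambda}.
\end{equation*}
Now Assumption~\ref{thm:sep} enters: it tells us $\bar s > s_{\hat y_2} - \delta$ for $Q_{\*x}$-almost every $\*x$, and combined with $s_{\hat y} < \lambda^{\text{wo}}$ the left-hand side is bounded by $\lambda^{\text{wo}} + \delta - s_{\hat y_2}$. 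So the sufficient condition reduces to $\tau \geq (\lambda^{\text{wo}} + \delta - s_{\hat y_2})/\log\frac{\lambda(K-1)}{1-\lambda}$.

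The final step---and the one that actually produces the specific algebraic form of $T$ stated in the theorem---is to replace the logarithm by a tractable rational lower bound. I would use the elementary inequality $\log x \geq 1 - 1/x$ for $x > 0$, applied to $x = \lambda(K-1)/(1-\lambda)$; straightforward simplification gives $\log\frac{\lambda(K-1)}{1-\lambda} \geq (K\lambda-1)/[\lambda(K-1)]$. The positivity of $T$ forces $K\lambda > 1$, which is exactly what makes this bound positive; dividing through then shows the logarithmic threshold is dominated by $T = \lambda(K-1)(\lambda^{\text{wo}} + \delta - s_{\hat y_2})/(K\lambda-1)$, so $\tau > T$ is sufficient.

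The main subtlety I anticipate concerns the chain of ``suffices it to show'' reductions. Jensen introduces slack precisely when the $s_i$ with $i\neq \hat y$ are non-uniform, so one should check that Assumption~\ref{thm:sep} is strong enough to control that slack---which it is, since the assumption bounds the \emph{average} deviation of the $s_i$ from $s_{\hat y_2}$, exactly matching the arithmetic mean $\bar s$ produced by Jensen. A minor bookkeeping point is that Assumption~\ref{thm:sep} holds $Q_{\*x}$-almost surely rather than everywhere, but because the FPR is defined as an integral against $Q_{\*x}$ the almost-sure event inclusion suffices for the final comparison.
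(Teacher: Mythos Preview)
Your proposal is correct and recovers exactly the stated constant $T$. The overall architecture matches the paper's: reduce the softmax event to a linear condition on the cosine similarities, invoke Assumption~\ref{thm:sep} to control the spread of the non-maximal scores, and compare against $\lambda^{\text{wo}}$. The technical route differs in one place. The paper applies the single tangent-line bound $e^{x}\geq 1+x$ termwise to $\sum_{i}\exp\bigl((s_i-s_{\hat y})/\tau\bigr)$, which linearizes the softmax event in one stroke and directly produces the threshold $(K-\tfrac{1}{\lambda})\tfrac{\tau}{K-1}=\tfrac{(K\lambda-1)\tau}{\lambda(K-1)}$. You instead compose two convexity bounds: Jensen on $\sum_{i\neq\hat y}e^{s_i/\tau}$ followed by $\log x\geq 1-1/x$ to turn the logarithmic threshold into the rational form of $T$. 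The two steps together land on the same constant. The paper's path is marginally more economical (one inequality instead of two), while yours has the small bonus that the intermediate threshold $(\lambda^{\text{wo}}+\delta-s_{\hat y_2})\big/\log\tfrac{\lambda(K-1)}{1-\lambda}$ is already a valid---and sharper---sufficient condition on $\tau$ before you relax it to $T$; the paper's argument does not surface that refinement.
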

\begin{proof}

By definition, we express the false positive rate $\operatorname{FPR}(\tau, \lambda)$ as follows,
\begin{equation*}
\begin{aligned}
\operatorname{FPR}(\tau, \lambda) &=  \mathbb{P}\left(G(\*x';\mathcal{Y}_\text{in},\mathcal{T},\mathcal{I}) =1 \mid z=0\right) \\
&= Q_{\*x'}\left(G(\*x';\mathcal{Y}_\text{in},\mathcal{T},\mathcal{I}) =1\right)\\
&= Q_{\*x'}\left(p_{\hat{y}}\left(\*x' ; \tau\right)>\lambda\right)\\
&=  Q_{\*x'}\left(\frac{e^{s_{\hat{y}}(\*x')/\tau}}{\sum_{j=1}^K e^{s_j(\*x')/\tau}}>\lambda\right)\\
&= Q_{\*x'}\left(\frac{1}{\lambda}>\sum_{i=1}^{K} \exp \left(\frac{s_i(\*x')-s_{\hat{y}}(\*x')}{\tau}\right)\right)
\end{aligned}
\end{equation*}

By inequality $e^{x} \geq 1+x$, we have,
$$
Q_{\*x'}\left(\frac{1}{\lambda}>\sum_{i=1}^{K} \exp \left(\frac{s_{i}(\*x')-s_{\hat{y}}(\*x')}{\tau}\right)\right) \leq Q_{\*x'}\left(\frac{1}{\lambda}>\sum_{i=1}^{K}\left[1+\frac{s_{i}(\*x')-s_{\hat{y}}(\*x')}{\tau}\right]\right)
$$
This indicates
$$
\begin{aligned}
& Q_{\*x'}\left(\frac{1}{\lambda}>\sum_{i=1}^{K} \exp \left(\frac{s_{i}(\*x')-s_{\hat{y}}(\*x')}{\tau}\right)\right) \leq Q_{\*x'}\left(\frac{1}{\lambda}>\sum_{i=1}^{K}\left[1+\frac{s_{i}(\*x')-s_{\hat{y}}(\*x')}{\tau}\right]\right) \\
&= Q_{\*x'}\left(\sum_{i=1}^{K}\left(s_{\hat{y}}(\*x')-s_{i}(\*x')\right)>\left(K-\frac{1}{\lambda}\right) \tau\right)
\end{aligned}
$$
Since
$$
\begin{aligned}
\sum_{i=1}^{K}\left(s_{\hat{y}}(\*x')-s_{i}(\*x')\right) &=\sum_{i \neq \hat{y}}\left(s_{\hat{y}}(\*x')-s_{\hat{y}_{2}}(\*x')+s_{\hat{y}_{2}}(\*x')-s_{i}(\*x')\right) \\
&=\sum_{i \neq \hat{y}}\left(s_{\hat{y}}(\*x')-s_{\hat{y}_{2}}(\*x')\right)+\sum_{i \neq \hat{y}}\left(s_{\hat{y}_{2}}(\*x')-s_{i}(\*x')\right) \\
&=(K-1)\left(s_{\hat{y}}(\*x')-s_{\hat{y}_{2}}(\*x')\right)+\sum_{i \neq \hat{y}}\left(s_{\hat{y}_{2}}(\*x')-s_{i}(\*x')\right)
\end{aligned}
$$
By Assumption~\ref{thm:assumption}, we have
$$
Q_{\*x'}\left(\sum_{i=1}^{K}\left(s_{\hat{y}}(\*x')-s_{i}(\*x')\right)<(K-1)\left(s_{\hat{y}}(\*x')-s_{\hat{y}_{2}}(\*x')\right)+(K-1) \delta\right)=1 .
$$
Therefore,
$$
\begin{aligned}
Q_{\*x'}\left(\sum_{i=1}^{K}\left(s_{\hat{y}}(\*x')-s_{i}(\*x')\right)>\left(K-\frac{1}{\lambda}\right) \tau\right) &\leq  Q_{\*x'}\left(s_{\hat{y}}(\*x')-s_{\hat{y}_{2}}(\*x')>-\delta_{2}+\left(K-\frac{1}{\lambda}\right) \frac{\tau}{K-1}\right) \\
& = Q_{\*x'}\left(s_{\hat{y}}(\*x')-s_{\hat{y}_{2}}(\*x')>-\delta_{2}+\lambda'\right)\\
&= Q_{\*x'}\left(s_{\hat{y}}(\*x')> s_{\hat{y}_{2}}(\*x')-\delta_{2}+\lambda'\right),
\end{aligned}
$$
where $\lambda' = \left(K-\frac{1}{\lambda}\right) \frac{\tau}{K-1}$ is a monotonic function of $\lambda$ (\ie minimizing false positive rate \emph{w.r.t.} $\lambda$ is equivalent to minimizing \emph{w.r.t.}  $\lambda'$.)

For $\tau > 0$, we can rewrite the MCM score as 
$$
\begin{aligned}
S_{\text{MCM}}(\*x';\mathcal{Y}_\text{in},\mathcal{T},\mathcal{I}) &= \max_{i\in[K]} \frac{e^{s_i(\*x')/\tau}}{\sum_{j=1}^K e^{s_j(\*x')/\tau}} =  \frac{e^{s_{\hat{y}}(\*x')/\tau}}{\sum_{j=1}^K e^{s_j(\*x')/\tau}} \\
&= \frac{1}{ 1 + \sum_{j=1, j \neq \hat{y}}^K e^{\left(s_j(\*x') -s_{\hat{y}}(\*x')\right)/\tau}}
\end{aligned}
$$
As $\hat y := \text{argmax}_{i\in[K]} s_i(\*x)$, $s_j(\*x') -s_{\hat{y}}(\*x') \leq 0$,  $S_{\text{MCM}}(\*x';\mathcal{Y}_\text{in},\mathcal{T},\mathcal{I}) $ is a monotonically decreasing function of $\tau$, we have: 
$$
S_{\text{MCM}}(\*x';\mathcal{Y}_\text{in},\mathcal{T},\mathcal{I}) > \lim_{\tau \rightarrow \infty}  \frac{1}{ 1 + \sum_{j=1, j \neq \hat{y}}^K e^{\left(s_j(\*x') -s_{\hat{y}}(\*x')\right)/\tau}} = {1\over K}
$$
Therefore by the definition of $\lambda$, we have $\lambda > {1\over K}$, $\lambda' = \left(K-\frac{1}{\lambda}\right) \frac{\tau}{K-1} > 0$

For moderately large $\tau > T$ where $T = { \lambda(K - 1)\left(\lambda^{\text{wo}} + \delta - s_{\hat{y}_{2}}\right) \over K\lambda-1} $,  we always have $s_{\hat{y}_{2}}(\*x')-\delta+\lambda' >\lambda^{\text{wo}} $. Therefore, we obtain the following inequality,
$$
\operatorname{FPR}(\tau, \lambda) \leq Q_{\*x'}\left(s_{\hat{y}}(\*x')> s_{\hat{y}_{2}}(\*x')-\delta_{2}+\lambda'\right)
\leq Q_{\*x'}\left(s_{\hat{y}}(\*x') >\lambda^{\text{wo}}\right) :=  \operatorname{FPR}^{\text{wo}}(\lambda^{\text{wo}}),
$$
which means that the FPR without softmax scaling is  larger than that with softmax scaling and a moderately large temperature. We show in Section~\ref{sec:closer} that the bound is indeed satisfied in practice with a large-scale ID dataset.
\end{proof}
\newpage
\section{Experimental Details}
\label{sec:exp_detail}

\subsection{Software and Hardware}
All methods are implemented in Pytorch 1.10. We run all OOD detection experiments on NVIDIA GeForce RTX-2080Ti GPU and use NVIDIA A100 GPU for fine-tuning CLIP and ViT.

\subsection{Hyperparameters} The only hyperparameter in MCM is the (temperature) scaling factor $\tau$. We use $\tau=1$ by default unless otherwise specified. Our experiments suggest that MCM is insensitive to the scaling factor, where $\tau$ in a wide range of $[0.5, 100]$ shares similar performance.

\subsection{Datasets}

\textbf{ImageNet-10} We create ImageNet-10 that mimics the class distribution of CIFAR-10 but with high-resolution images. It contains the following categories (with class ID): warplane (n04552348), sports car (n04285008), brambling bird, (n01530575), Siamese cat (n02123597), antelope (n02422699), Swiss mountain dog (n02107574), bull frog (n01641577), garbage truck (n03417042), horse (n02389026), container ship (n03095699).

\textbf{ImageNet-20} For hard OOD evaluation with realistic datasets, we curate ImageNet-20, which consists of 20 classes semantically similar  to ImageNet-10 (\eg dog (ID) vs. wolf (OOD)). The categories are selected based on the distance in the WordNet synsets~\cite{fellbaum2010wordnet}. Specifically, it contains the following categories: sailboat (n04147183), canoe (n02951358), balloon (n02782093), tank (n04389033), missile (n03773504), bullet train (n02917067), starfish (n02317335), spotted salamander (n01632458), common newt (n01630670), zebra (n01631663), frilled lizard (n02391049), green lizard (n01693334), African crocodile (n01697457), Arctic fox (n02120079), timber wolf (n02114367), brown bear (n02132136), moped (n03785016), steam locomotive (n04310018), space shuttle (n04266014), snowmobile (n04252077).

We hope the above two datasets will help future research on large-scale hard OOD detection. We provide a script for generating the datasets at 
\url{https://github.com/deeplearning-wisc/MCM}.

\textbf{ImageNet-100} We randomly sample 100 classes from ImageNet-1k to curate ImageNet-100. To facilitate reproducibility, the script for generating the dataset and the class list are provided at \url{https://github.com/deeplearning-wisc/MCM}.

\paragraph{Conventional (non-spurious) OOD datasets} Huang~\etal~\cite{huang2021mos} curate a diverse collection of subsets from iNaturalist~\cite{van2018inaturalist}, SUN~\cite{xiao2010sun}, Places~\cite{zhou2017places}, and Texture~\cite{cimpoi2014describing} as large-scale OOD datasets for ImageNet-1k, where the classes of the test sets do not overlap with ImageNet-1k. We provide a brief introduction to each dataset as follows.
 
\textbf{iNaturalist} contains images in the natural world~\cite{van2018inaturalist}. It has 13 super-categories and 5,089 sub-categories covering plants, insects, birds, mammals, and so on. We use the subset that contains 110 plant classes not overlapping with ImageNet-1k.

\textbf{SUN} stands for the Scene UNderstanding Dataset~\cite{xiao2010sun}. SUN contains 899 categories that cover more than indoor, urban, and natural places with or without human beings appearing. We use the subset which contains 50 natural objects not showing in ImageNet-1k.

\textbf{Places} is a large scene photographs dataset~\cite{zhou2017places}. It contains photos that are labeled with scene semantic categories from three macro-classes: Indoor, Nature, and Urban. The subset we use is sampled from 50 categories that are not present in ImageNet-1k.

\textbf{Texture} stands for the Describable Textures Dataset~\cite{cimpoi2014describing}. It contains images of textures and abstracted patterns. As no categories overlap with ImageNet-1k, we use the entire dataset as in~\cite{huang2021mos}.
\subsection{Baselines and sources of model checkpoints}
For the Mahalanobis score~\cite{lee2018simple}, we use the feature embeddings without $l_2$ normalization as Gaussian distributions naturally do not fit hyperspherical features. Alternatively, one can normalize the embeddings first and then apply the Mahalanobis score. 

For Fort~\etal~\cite{fort2021exploring} in Table~\ref{tab:1k}, we fine-tune the whole ViT model on the ID dataset. Specifically, we use the publicly available checkpoints from Hugging Face where the ViT model is pre-trained on ImageNet-21k and fine-tuned on ImageNet-1k. For example, the checkpoint for ViT-B is available at \url{https://huggingface.co/google/vit-base-patch16-224}. 

For CLIP models, our reported results are based on checkpoints provided by Hugging Face for CLIP-B \url{https://huggingface.co/openai/clip-vit-base-patch16} and CLIP-L \url{https://huggingface.co/openai/clip-vit-large-patch14}. Similar results can be obtained with checkpoints in the codebase by OpenAI \url{https://github.com/openai/CLIP}. Note that for CLIP (RN50x4), which is not available in Hugging Face, we use the checkpoint provided by OpenAI. 
\section{Spurious OOD Datasets}
\label{sec:spurious}
In general, spurious attributes refer to statistically informative features that co-exist with the majority of ID samples but do not necessarily capture cues related to the labels such as color, texture, background, etc~\cite{ barbu2019objectnet,beery2018recognition,geirhos2018imagenettrained, xiao2021noise,ijcai2017zhu}.
A recent work~\cite{ming2022spurious} investigated a new type of hard OOD samples (called spurious OOD) that contain spurious or environmental features, but no object features related to the ID classes. A concrete example is shown in Figure~\ref{fig:spurious}, where images of birds co-occur frequently with either the land background or water background. Modern neural networks can spuriously rely on the image background (\eg water or land) for classification instead of learning to recognize the actual object~\cite{ribeiro2016should}. Ming~\emph{et al.}~\cite{ming2022spurious} show that spurious OOD samples remain challenging for most common OOD detection methods based on pure vision models such as ResNet~\cite{he2016deep}. 

For ID dataset, we use Waterbirds~\cite{sagawa2019distributionally}, which combines bird photographs from \textsc{CUB-200}~\cite{WahCUB_200_2011}
with water or land background images from \textsc{Places}~\cite{zhou2017places}. For the spurious OOD dataset, we use the one created in~\cite{ming2022spurious} consisting of land and water background from Places~\cite{zhou2017places}.

\begin{figure*}[thb]
  \centering
    \includegraphics[width=0.85\linewidth]{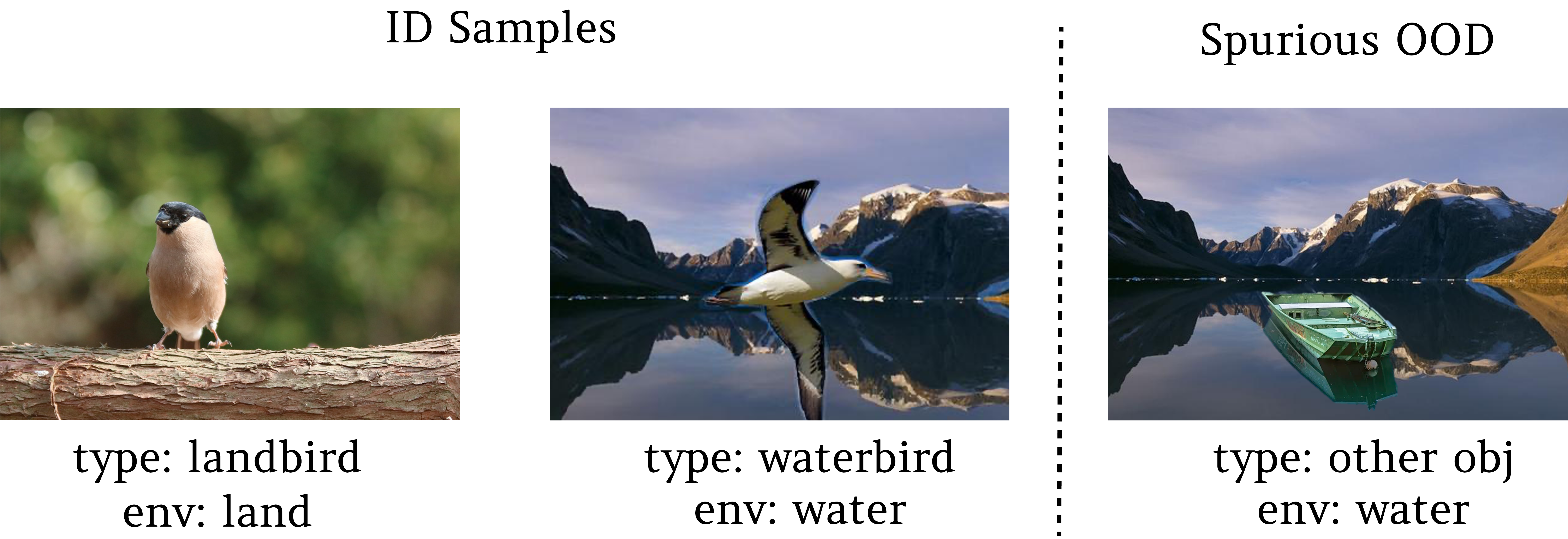}
\caption{\small Illustration of spurious OOD samples for Waterbirds~\cite{sagawa2019distributionally}. Images are taken from~\cite{ming2022spurious}.}
\label{fig:spurious}
\end{figure*}

\section{ID Classification Accuracy}
\label{sec:acc}
Table~\ref{tab:acc} shows the multi-class classification accuracy on ImageNet-1k for methods in Table~\ref{tab:1k}.
\begin{table}[hbt]
\caption{ID classification accuracy on ImageNet-1k (\%)}
\label{tab:acc}
\centering
\small
\begin{tabular}{lc}
\toprule
\textbf{Method} & \textbf{ID ACC}\\
\midrule
\multicolumn{2}{c}{\textbf{zero-shot }}   \\ 
MCM (CLIP-B/16) &67.01 \\
MCM (CLIP-L/14) & 73.28\\
\multicolumn{2}{c}{\textbf{w. fine-tuning }}   \\ 
MSP (CLIP-B/16) & 79.39 \\
MSP (CLIP-L/14) &84.12\\
Energy~\cite{liu2020energy} (CLIP-B/16)&79.39 \\
Energy~\cite{liu2020energy} (CLIP-L/14) & 84.12\\
Fort et al.~\cite{fort2021exploring} (ViT-B/16) &81.25\\ 
Fort et al.~\cite{fort2021exploring} (ViT-L/14) & 84.05\\
MOS~\cite{huang2021mos} (BiT) & 75.16 \\
\bottomrule
\end{tabular}
\end{table}

\section{Implementation of CLIP-Based Baselines}
\subsection{Overview of Baselines}
\begin{figure*}[thb]
  \centering
    \includegraphics[width=0.95\linewidth]{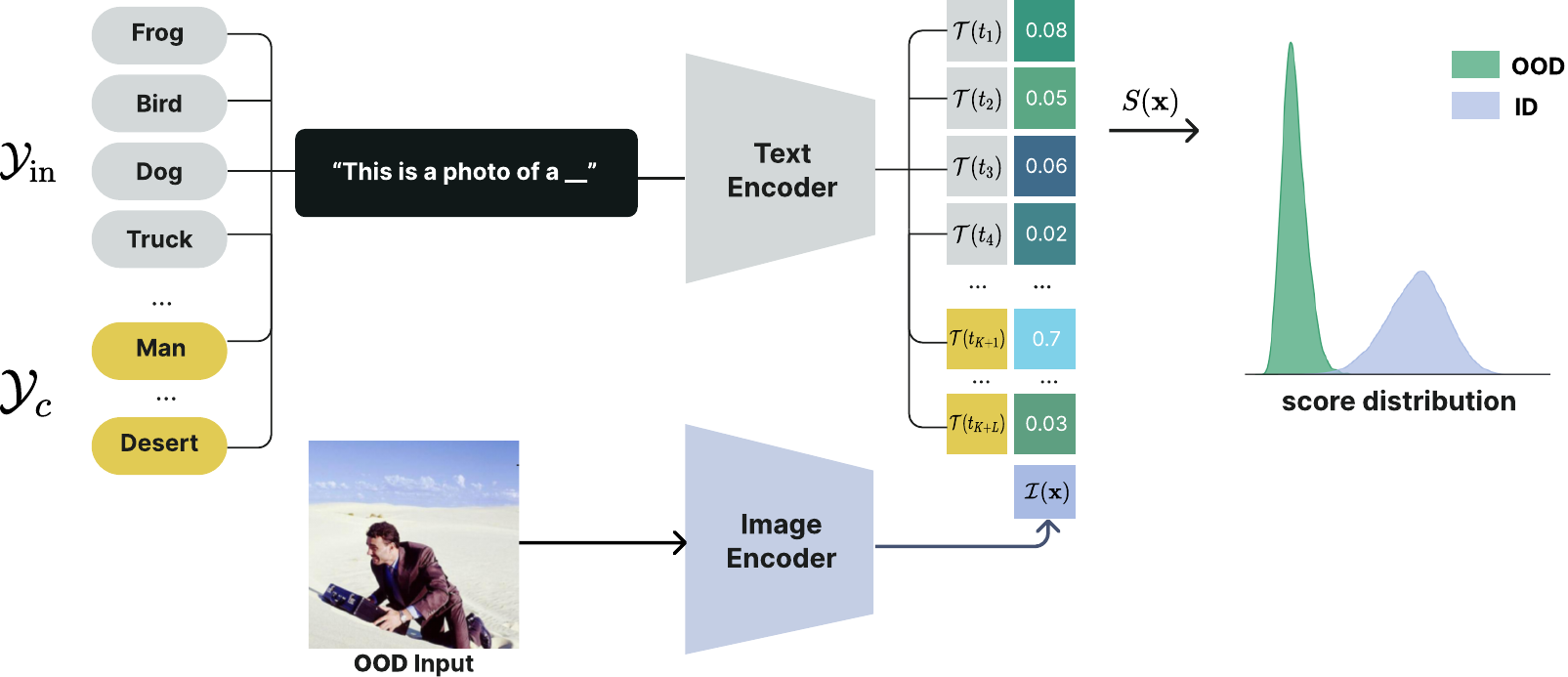}
\caption{\small Zero-shot OOD detection with candidate OOD labels. The ID classification task is defined by a set of class labels $\mathcal{Y}_\text{in}$. With an additional set of candidate labels  $\mathcal{Y}_\text{C}$ that describes the contents of the input image, the OOD detection scoring function can be calculated by normalizing over the expanded space of cosine similarities.}
\vspace{-0.3cm}
\label{fig:teaser_2}
\end{figure*}

\begin{figure*}[!t]
\centering
\includegraphics[width=0.9\textwidth]{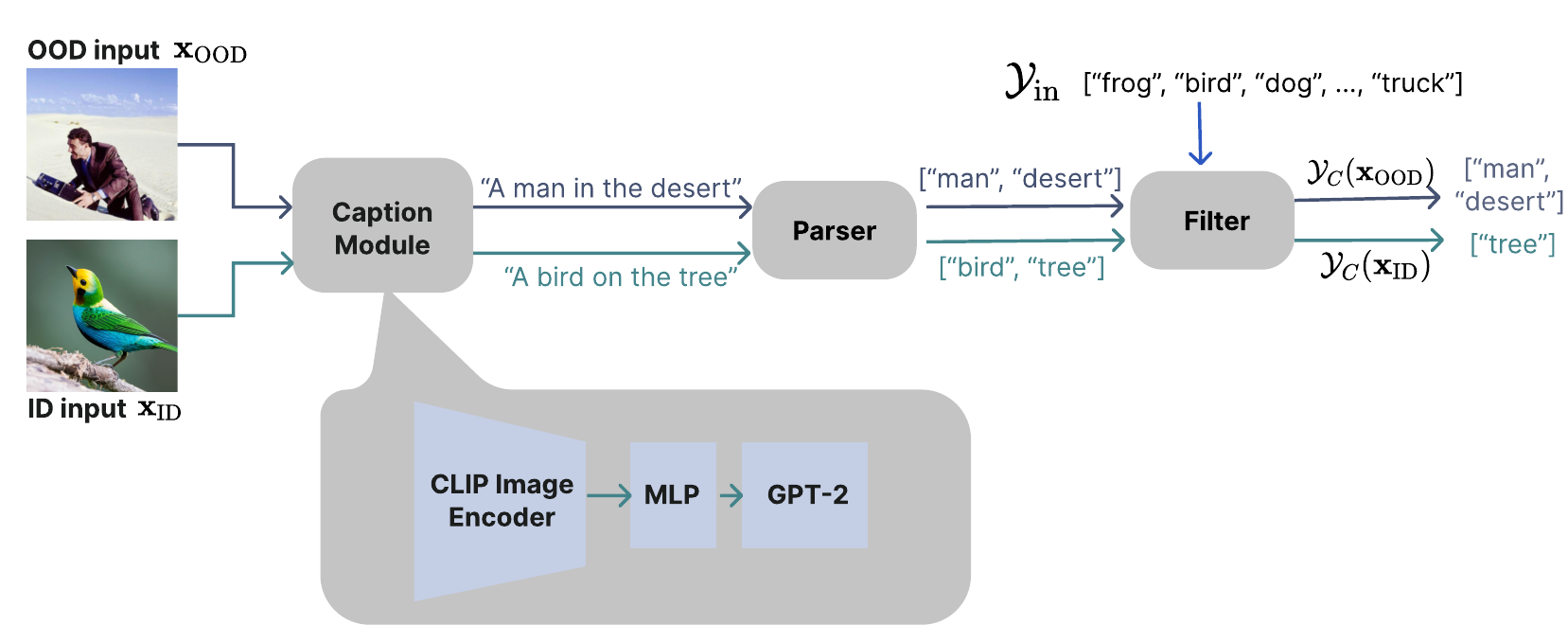}
\caption{Improved pipeline to generate candidate OOD labels. It consists of three main components: a caption generator, a syntactic parser, and a filtering module to remove candidate labels that overlap with the ID label set.}
\label{fig:pipeline}
\end{figure*}

\label{sec:candidate_label}
We review two previous works on CLIP-based OOD detection~\cite{esmaeilpour2022zero,fort2021exploring} in Figure~\ref{fig:teaser_2}, which derive the scoring function based on candidate OOD labels. For a given task with ID label set $\mathcal{Y}_\text{in}=\{y_1, y_2,...,y_K\}$ and candidate labels $\mathcal{Y}_\text{C}=\{y_{K+1}, y_{K+2},...,y_{K+L}\}$, where ideally $\mathcal{Y}_\text{in}\cap \mathcal{Y}_\text{C} = \emptyset$, they construct a collection of text embeddings $\mathcal{T}(t_i), i\in \{1,2,...,K+L\}$. Here, $t_i$ is the text prompt ``\texttt{this is a photo of a $\langle y_i \rangle$}''
for a label $y_i$. For any test input image $\*x$, we can calculate the label-wise matching score based on the cosine similarity between the image and text features: $s_i(\*x) = \frac{\mathcal{I}(\*x) \cdot \mathcal{T}(t_i)}{\lVert \mathcal{I}(\*x')\rVert \cdot \lVert \mathcal{T}(t_i) \rVert}$.
Therefore, a detection score can be derived as:
\begin{equation*}
    S(\*x;\mathcal{Y}_\text{in},\mathcal{Y}_\text{C}, \mathcal{T},\mathcal{I}) = \sum_{i=1}^{K} \frac{ e^{s_i(\*x)/\tau}}{\sum_{j=1}^{K+L} e^{s_j(\*x)/\tau}},
\end{equation*}
where $\tau > 0$ is the temperature scaling hyperparameter. 

\subsection{Obtaining OOD Candidate Labels}
For the baseline methods, obtaining OOD candidate labels is a major challenge and limitation. Recently,~\cite{esmaeilpour2022zero} propose ZO-CLIP, where a transformer (decoder)  based on the image encoder of CLIP is used to generate candidate labels. The transformer is trained from scratch on the  COCO dataset~\cite{lin2014microsoft} with simple teacher forcing algorithms. Although the decoder trained on COCO may work well on CIFAR (ID), it does not scale up to large-scale datasets such as ImageNet~\cite{deng2009imagenet} where categories are not covered in COCO. As a result,~\cite{esmaeilpour2022zero} only test on small-scale datasets with common classes such as CIFAR (ID).

We improve the baseline by using a high-quality caption generator pre-trained on much larger datasets, which not only saves computational overhead but can potentially improve the quality of generated labels.  The pipeline involves three components (see Figure~\ref{fig:pipeline}):
\begin{itemize}
    \item A caption generator. Given an input image, it generates a caption serving as the textual description of the input.  In this work, we consider ClipCap~\cite{mokady2021ClipCap}, which uses GPT-2~\cite{radford2019language} to generate captions based on CLIP's image encoder. ClipCap is pre-trained on a much larger dataset Conceptual Captions~\cite{ng2020understanding} compared to COCO, which can be viewed as an enhanced version of the ZO-CLIP baseline~\cite{esmaeilpour2022zero}. The checkpoints are publicly available\footnote{\url{https://github.com/rmokady/CLIP_prefix_caption}}. 
    \item  A syntactic parser. Given a caption, we extract noun objects using a parsing toolkit released by spaCy \footnote{\url{https://spacy.io/models/en}}. Those nouns can be used as candidate labels $\mathcal{Y}_C$ of the input image.
    \item A filter module. Unlike \cite{esmaeilpour2022zero}, we further enhance the baseline by adopting a filtering technique to remove overlapping categories in $\mathcal{Y}_C$ with ID labels $\mathcal{Y}_{\text{in}}$, which we detail below.
\end{itemize}

\subsection{Label Filtering}
\label{sec:example}
\paragraph{Example.} To illustrate the effects of filtering, we begin with a concrete example where ID labels are [``frog'',``bird''$\ldots$ ``truck''], as shown in Figure~\ref{fig:pipeline}. The generated labels (without filtering) of an ID input of a bird sitting on a tree are [``bird'', ``tree'']. Therefore, $\mathcal{Y}_\text{in}\cup\mathcal{Y}_\text{C}=$[``frog'',``bird''$\ldots$ ``truck'',``bird'', ``tree'']. 
 Ideally, the softmax probability distribution over the concatenated labels would be $[0, 0.5, 0, \ldots, 0.5, 0]$ and by definition $S(\*x) \approx 0.5$. However, if we filter the generated labels to eliminate nouns with similar meanings as ID, our concatenated labels would be [``frog'',``bird''$\ldots$ ``truck'',``tree''] and the probability vector would be $[0,1,0, \ldots, 0]$, which leads to a much higher score $S(\*x) = 1$. In contrast, the generated labels for an OOD input with a caption ``man in the desert'' would be [``man'', ``desert'']. The resulting probability vector would be $[0,0,0, \ldots,1, 0]$  and the score $S(\*x) = 0$. Therefore, filtering makes it easier to separate ID inputs from OOD inputs (\cf Figure~\ref{fig:effects}). 
\paragraph{String-based filtering.} To implement the idea of filtering, we need a measurement of the similarity between the generated labels and ID labels. The simplest way is string-based filtering where a generated label is filtered if it matches any ID labels (in the string format), as in the case above.

\begin{figure*}[h]
\centering
\includegraphics[width=0.7\textwidth]{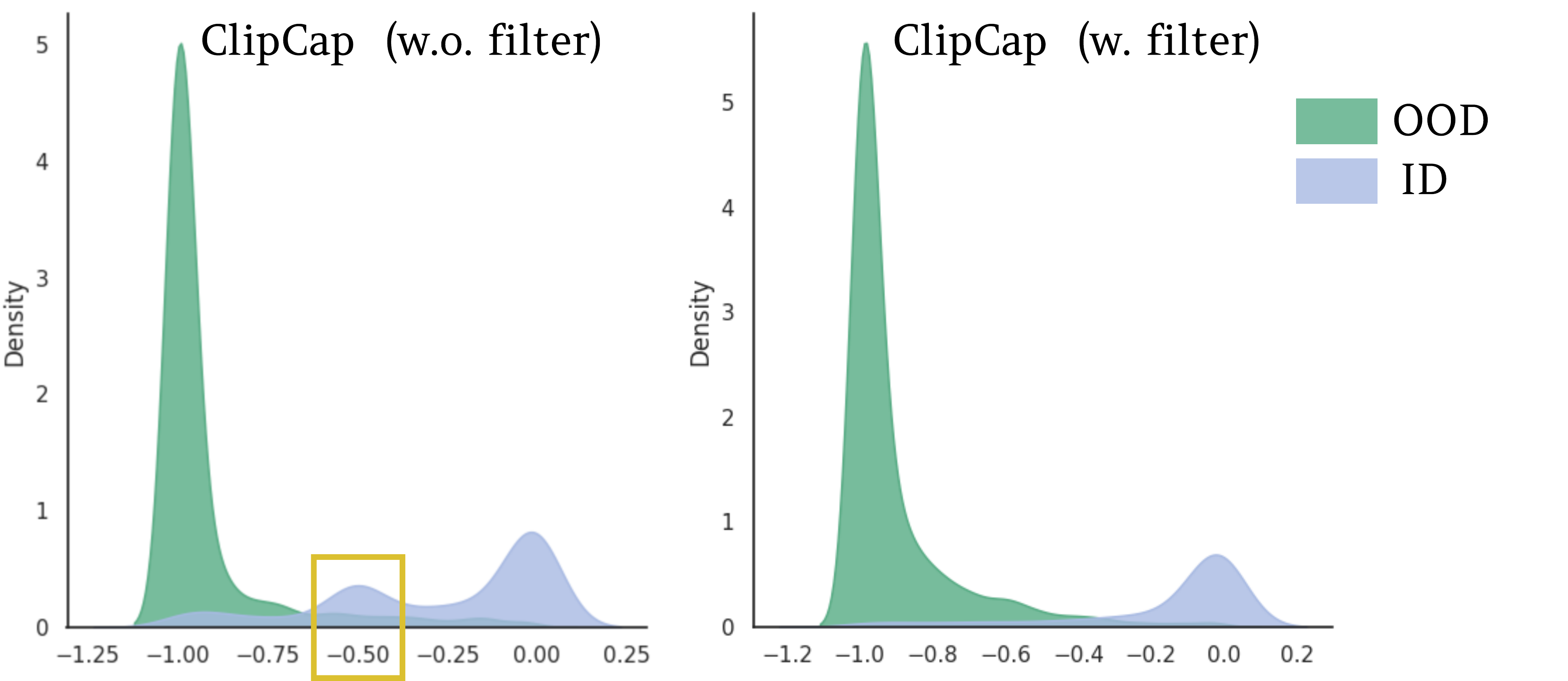}
\caption{Score distributions for ImageNet-10 (ID) and iNaturalist (OOD) inputs.  Simple string-based filtering alleviates the overlap between OOD inputs and ID inputs especially with scores around 0.5 (yellow rectangle), resulting in better ID-OOD separability.}
\label{fig:effects}
\end{figure*}

\section{Alternative Scoring Functions}
 
We explore the effectiveness of several alternative scoring functions:
\begin{itemize}
    \item Entropy: the (negative) entropy of softmax scaled cosine similarities denoted as $ S_{\text{entropy}}$;
    \item Var: the variance of the cosine similarities denoted as $ S_{\text{var}}$;
    \item Scaled: the scaled difference between the largest and second-largest cosine similarities $ S_{\text{scaled}}:= e^{s_{\hat y}(\*x) - s_{\hat y_2}(\*x)} $ where $\hat y:= \text{argmax}_{i\in[K]} s_i(\*x)$ and $\hat y_2:= \text{argmax}_{i\neq \hat y, i\in[K]} s_i(\*x)$.
\end{itemize}

As shown in Table~\ref{tab:others},  MCM still gives the most promising results compared to the other three alternative scores across most OOD test sets. 

\begin{table}[tbh]
\caption{Comparison with other scaling functions (applied to inner products) on the large-scale benchmark ImageNet-1k (ID). We use CLIP-B/16 as the backbone.}
\label{tab:others}
\centering
\resizebox{0.95\textwidth}{!}{{
\begin{tabular}{lcccccccccc}
\toprule
\multirow{3}{*}{\textbf{Method}} & \multicolumn{8}{c}{\textbf{OOD Dataset}}                        & \multicolumn{2}{c}{\multirow{2}{*}{\textbf{Average} }} \\
                        & \multicolumn{2}{c}{iNaturalist} & \multicolumn{2}{c}{SUN} & \multicolumn{2}{c}{Places} & \multicolumn{2}{c}{Texture} & \multicolumn{2}{c}{}             \\
                        \cmidrule(lr){2-3}\cmidrule(lr){4-5}\cmidrule(lr){6-7}\cmidrule(lr){8-9}\cmidrule(lr){10-11}
                        & \textbf{FPR$\downarrow$}         & \textbf{AUROC$\uparrow$}      & \textbf{FPR$\downarrow$}           & \textbf{AUROC$\uparrow$}         & \textbf{FPR$\downarrow$}          & \textbf{AUROC$\uparrow$}        & \textbf{FPR$\downarrow$}         & \textbf{AUROC$\uparrow$}      & \textbf{FPR$\downarrow$}          & \textbf{AUROC$\uparrow$}        \\  
                        \midrule

Entropy
&84.44&	63.50&	93.79&	62.54&	94.10&	64.15&	97.16&	58.98&	92.37	&62.29\\
Var 
&87.42&	63.87&	68.71&	81.02&	76.28	&75.38&	80.04&	71.90&	78.11&	73.04\\
Scaled
&89.06&	72.26&	89.06&	70.81&	89.08&	69.66&	89.56&	68.17&	89.19&	70.22\\
MCM 
& 30.91	& 94.61	& 37.59& 	92.57& 	44.69& 	89.77& 	57.77& 	86.11& 	42.74& 	90.77\\

\bottomrule
\end{tabular}}
}
\end{table}

\end{document}